\documentclass[11pt, a4paper, oneside,reqno]{amsart}

\makeatletter
\def\@settitle{\begin{center}%
		\baselineskip14\p@\relax
		\normalfont\LARGE\bfseries
		\@title
	\end{center}%
}

\def\section{\@startsection{section}{1}%
	\z@{.7\linespacing\@plus\linespacing}{.5\linespacing}%
	{\normalfont\large\bfseries}}

\def\subsection{\@startsection{subsection}{2}%
	\z@{.5\linespacing\@plus.7\linespacing}{.5\linespacing}%
	{\normalfont\bfseries}}

\def\@setauthors{%
  \begingroup
  \def\thanks{\protect\thanks@warning}%
  \trivlist
  \centering\footnotesize \@topsep30\p@\relax
  \advance\@topsep by -\baselineskip
  \item\relax
  \author@andify\authors
  \def\\{\protect\linebreak}%
  \authors%
  \ifx\@empty\contribs
  \else
    ,\penalty-3 \space \@setcontribs
    \@closetoccontribs
  \fi
  \endtrivlist
  \endgroup
}

\makeatother

\usepackage[square,numbers]{natbib}
\usepackage{xcolor}
\usepackage{graphicx}
\usepackage{hyperref}
\hypersetup{hidelinks}
\usepackage{multirow} 
\usepackage{bbm}
\usepackage{tikz-cd}
\usepackage[ruled,linesnumbered, noend]{algorithm2e}
\usepackage{booktabs}
\usepackage{multirow}
\usepackage[font=small,labelfont=rm]{subcaption}
\usepackage[font=small,margin=10pt]{caption}
\usepackage{amsmath,amssymb,amsfonts}
\usepackage{bm,mathtools}

\def\borel{{\mathcal{B}}}
\def\expect{{\mathbb{E}}}

\def\Prob{\mathbb{P}}
\def\ProbHat{\widehat{\mathbb{P}}}

\def\ProbQ{\mathbb{Q}}
\def\ProbD{\mathbb{D}}

\def\samplesVector{\mathcal{D}_N}

\def\wasserstein{{\mathbb{W}}}

\def\indicator{{\mathbbm{1}}}

\newcommand{\elem}[2]{{#1^{{(#2)}}}}

\newcommand{\norm}[1]{\left\lVert#1\right\rVert}

\def\realNum{{\mathbb{R}}}

\def\natNum{\mathbb{N}}

\def\sSimplex{\Delta}

\def\sB{{\mathcal{B}}}
\def\sC{{\mathcal{C}}}
\def\sD{{\mathcal{D}}}
\def\sE{{\mathcal{E}}}

\def\sN{{\mathcal{N}}}

\def\sP{{\mathcal{P}}}

\def\sT{{\mathcal{T}}}

\def\sX{{\mathcal{X}}}

\def\vgamma{{{\gamma}}}

\def\vpi{{{\pi}}}

\def\vomega{{{\omega}}}

\def\vc{{{c}}}

\def\vv{{{v}}}

\def\vx{{{x}}}

\def\vz{{{z}}}

\newcommand{\evpi}[1]{{\vpi^{(#1)}}}

\newcommand{\evomega}[1]{{\vomega^{(#1)}}}

\newcommand{\evgamma}[1]{{\vgamma^{(#1)}}}

\newcommand{\evv}[1]{{\vv^{(#1)}}}

\newcommand{\evz}[1]{{\vz^{(#1)}}}

\def\lowerb{p_{\ell}}
\newcommand{\evlowerb}[1]{{\lowerb^{(#1)}}}
\def\upperb{p_{u}}
\newcommand{\evupperb}[1]{{\upperb^{(#1)}}}

\newcommand{\constrainedSimplex}[2]{\sSimplex_{[#1, #2]}}

\newcommand{\changed}[1]{{#1}}

\usepackage{amsthm}
    
\theoremstyle{plain}
\newtheorem{theorem}{Theorem}[section]
\newtheorem{proposition}[theorem]{Proposition}

\newtheorem{lemma}[theorem]{Lemma}

\newtheorem{remark}{Remark}

\newif\ifdoublecolumn
\doublecolumnfalse
\newcommand{\maybenewline}{\ifthenelse{\boolean{doublecolumn}}{\\}{}}
\newcommand{\maybeamp}{\ifthenelse{\boolean{doublecolumn}}{&}{}}
\newcommand{\maybeqquad}{\ifthenelse{\boolean{doublecolumn}}{\qquad}{}}
\newcommand{\maybenonumber}{\ifthenelse{\boolean{doublecolumn}}{\nonumber}{}}

\title{Efficient Distribution Learning with Error Bounds in Wasserstein Distance}

\author{Eduardo Figueiredo$^{*}$, Steven Adams$^{*}$, Luca Laurenti$^{**}$
}
\thanks{$^*$ Equal contribution. The authors are with the Delft Center for
Systems and Control, Delft University of Technology, Delft, The Netherlands.$^{**}$ is also with AI4I, Turin, Italy. Corresponding author’s email: e.figueiredo@tudelft.nl. This research is partially supported by the NWO (grant
OCENW.M.22.056)}

\begin{document}
\maketitle

\begin{abstract}
    
The Wasserstein distance has emerged as a key metric to quantify distances between probability distributions, with applications in various fields, including machine learning, control theory, decision theory, and biological systems. Consequently, learning an unknown distribution with non-asymptotic and easy-to-compute error bounds in Wasserstein distance has become a fundamental problem in many fields.   
In this paper,  we devise a novel algorithmic and theoretical framework to approximate an unknown probability distribution $\Prob$ from a finite set of samples by an approximate discrete distribution $\ProbHat$ while bounding the Wasserstein distance between $\Prob$ and $\ProbHat$. Our framework leverages optimal transport, nonlinear optimization, and concentration inequalities.  In particular, we show that, even if $\Prob$ is unknown, the Wasserstein distance between $\Prob$ and $\ProbHat$ can be efficiently bounded with high confidence by solving a tractable optimization problem (a mixed integer linear program) of a size that only depends on the size of the support of $\ProbHat$. This enables us to develop intelligent clustering algorithms to optimally find the support of $\ProbHat$ while minimizing the Wasserstein distance error. On a set of benchmarks, we demonstrate that our approach outperforms state-of-the-art comparable methods by generally returning approximating distributions with substantially smaller support and tighter error bounds.   

\end{abstract}


\section{Introduction}\label{section:introduction}
Given a vector of $N$ independent and identically distributed (i.i.d.) samples $\samplesVector=(\vx_1,...,\vx_N)$ from an unknown distribution $\Prob$, learning a distribution $\ProbHat$ approximating $\Prob$ from $\samplesVector$ represents a fundamental problem in many fields, ranging from machine learning and control theory to biological systems and statistics \citep{campi2009scenario,karaman2011sampling,badings2022sampling,echeveste2020cortical}. While many choices are possible
, a case of particular importance, both theoretical and practical, is when $\ProbHat$ is assumed to be a discrete distribution, e.g., the empirical distribution \citep{fournier2015rate,weed2019sharp,mohajerin2018data,kuhn2019wasserstein}. This is motivated by the computational tractability of this setting and by the consistency and simplicity of the resulting discrete distributions, which are particularly suitable to be analyzed or propagated through possibly non-linear functions \citep{figueiredo2025efficient}. During the years, starting from the seminal work of 
\citep{dudley1969speed}, this setting has led to many works that have focused on establishing rates for the convergence of $\ProbHat$ to $\Prob$ as the number of samples increases \citep{bolley2007quantitative,canas2012learning, dereich2013constructive, boissard2014mean, fournier2015rate, weed2019sharp, lei2020convergence}. Although various choices are possible to quantify the distance between $\ProbHat$ and $\Prob$, the Wasserstein distance \citep{villani2008optimal, panaretos2019statistical} has arguably emerged as the standard choice: the Wasserstein distance is well defined when one of the distributions has discrete support and convergence in Wasserstein distance implies weak convergence \citep{villani2008optimal}. Moreover, compared to other distances based on optimal transport, such as the Sinkhorn divergence, the Wasserstein distance admits sharp dual characterizations that, in many important cases, yield convex reformulations of worst-case probabilities over Wasserstein balls \citep{mohajerin2018data}. This has led to various recent works that have derived sharp finite-sample convergence rates in Wasserstein distance for the closeness of $\ProbHat$ to $\Prob$ \citep{canas2012learning,fournier2015rate,valiant2016instance,singh2018minimax,weed2019sharp,lei2020convergence,niles2022estimation}. Unfortunately, the resulting convergence rates and confidence intervals still often require too many samples to be effectively used in practice, and often require constants that are difficult to compute. This has become a key limitation that prevents the application of these results in many applied fields, where only finite samples are commonly available and limiting the number of samples and the size of the support of the resulting $\ProbHat$, while guaranteeing a sufficiently small error threshold, has now become an essential prerequisite \citep{mohajerin2018data,shafieezadeh2015distributionally,boskos2023high,gracia2024data}. Closing this gap is the goal of this paper.

The conservativeness of the existing convergence rates is due to the fact that if $\Prob$ is fully unknown, then one must derive rates that are valid for all possible distributions and that hold in the arbitrarily small error regime. As a result, in general, if each sample has support in $\sX \subset \realNum^d$, a convergence rate of order $N^{-\frac{1}{d}}$, due to the discretization of $\sX,$ is unavoidable \citep{dudley1969speed,fournier2015rate,weed2019sharp}. Nevertheless, in practice, a multiscale behavior is often observed, where the error decreases faster when it is not small and the theoretical convergence rate is only approached close to the limit \citep{weed2019sharp}, when the error is often already smaller than what is acceptable in many practical applications.
Motivated by these results, in this paper, we focus on the finite sample setting and show how the information in the samples $\samplesVector$ can be exploited to automatically restrict the class of distributions to be considered and obtain substantially tighter high probability bounds on the distance between $\Prob$ and $\ProbHat$, without making additional, and generally restrictive, hypothesis, as common in the literature \citep{dudley1969speed,boissard2014mean,weed2019sharp,singh2018minimax,niles2022estimation}. The focus is on shifting from standard uniform convergence rates that have appeared in the literature to compute high probability bounds of the form $\wasserstein_{\rho}(\Prob,\ProbHat)\leq \epsilon(\samplesVector)$, where $\wasserstein_{\rho}(\Prob,\ProbHat)$ is the $\rho$-Wasserstein distance between  $\Prob$ and $\ProbHat$, and $\epsilon(\samplesVector)$ is an error bound that depends itself on the data. Critically, we show that if we let $\epsilon$ be a random variable dependent on the data, then, by solving a set of tractable optimization problems (mixed integer linear program (MILP)), one can obtain tighter error bounds compared to existing approaches. The rationale is that if one accepts having an error that is not known until data are collected, then substantially tighter bounds can be obtained by relying on the properties of the observed data.
We should stress that this trade-off is generally acceptable in many applications and is commonly exploited in many techniques from various fields, including statistics, optimization, and machine learning  \citep{vovk2005algorithmic,angelopoulos2021gentle,campi2018wait}.

Our approach relies on results from optimal transport, quantization theory, stochastic optimization, and unsupervised learning. We first show that if $\ProbHat$ has discrete support, then, even if $\Prob$ is unknown, $\wasserstein_{\rho}(\Prob,\ProbHat)$ 
can be upper bounded with high confidence by solving a MILP  whose number of variables only depends on the size of the support of $\ProbHat$. Then, we show that the resulting MILP can be further relaxed into a MILP with only $1$ integer variable independently of $\Prob$ and $\ProbHat$, thus guaranteeing efficiency. By relying on the relationship between $K$-means \citep{mcqueen1967some} and Wasserstein distance \citep{villani2008optimal}, we then present a modified version of the Lloyd's algorithm to find the $\ProbHat$ that approximately minimize $\wasserstein_{\rho}(\Prob,\ProbHat)$. On a series of benchmarks, both synthetic and taken from real-world datasets, we show that our framework substantially outperforms state-of-the-art by returning distributions of smaller support and tighter error bounds. 
For instance, on the MiniBooNE dataset \citep{miniboone_uci} with ambient dimension $d=50$, our method is able to approximate the unknown data distribution with a discrete distribution of support size of only $M=100$ yielding a high-confidence $\wasserstein_2$ bound of $0.5$, compared to state-of-the-art approaches based on the empirical distribution that produces a distribution of support size of over $10^5$ (the number of samples) yielding a bound for the same confidence on the $\wasserstein_2$ greater than  $11$.


The rest of the paper is organized as follows. Section \ref{section:preliminaries} introduces the mathematical foundations and notation used throughout the work. Section \ref{section:main-theoretical} states the main theoretical results, \changed{which are then used in Section \ref{section:partition} to devise our distribution-learning framework}. In Section \ref{section:experiments}, we evaluate the proposed framework on both synthetic and real-world datasets, and compare our approach with state-of-the-art. Proofs and additional details can be found in the Appendix. 


\section{Preliminaries}\label{section:preliminaries}
We start by introducing the mathematical notation used in this paper and giving a formal introduction to the $\rho$-Wasserstein distance.

\subsection{Notation}\label{prelim:notation}
For a vector $x \in \realNum^d$,  we denote by $x^{(i)}$ its $i$-element. The vector of ones is defined as $\bar\indicator=(1,\dots,1)^\top$. The set of indexes $\{1,\dots,N\}$ is denoted as $[N]$. For a given set $C \subseteq \sX$, we denote $\norm{C} = \sup_{x,y\in C}\norm{x-y}$ as the diameter of $C$. We represent $\indicator_{C}(x)= 
    1 \,\text{ if } x\in C; 
    0  \, \text{ otherwise}$ as the indicator function for the set $C$. For $\sX \subseteq \realNum^d$, $\big\{ C_1,...,C_M \big\}$ is a partition of $\sX$ in $M$ \emph{regions} if $C_i \subseteq \sX$, $\bigcup_{i=1}^M C_i = \sX$, and $C_i \cap C_j = \emptyset \; \forall i \neq j$.
Given a Borel measurable space $\sX \subseteq \realNum^d$, we denote by $\mathcal{B}(\sX)$ the Borel sigma algebra over $\sX$ and by $\sP(\sX)$ the set of probability distributions on $\sX$.
For a random variable $\vx$ taking values in $\sX$, $\vx \sim \Prob \in \sP(\sX)$ represents the probability measure associated to $x$. 
For $N \in \natNum$, $\sSimplex^N := \{ \omega \in \realNum^{N}_{\geq 0} \; : \; \sum_{i=1}^{N} \evomega{i} = 1 \}$ is the $N$-probability simplex. We denote with $\sSimplex_{[a,b]}^N := \{ \omega \in \realNum^{N}_{\geq 0} \; : \; \sum_{i=1}^{N} \evomega{i} = 1, \; a^{(i)} \leq \evomega{i} \leq b^{(i)} \}$ the box-constrained probability simplex. A discrete probability distribution $\ProbD \in \sP(\sX)$ is defined as $\ProbD=\sum_{i=1}^N\evpi{i}\delta_{\vc_i}$, where $\delta_\vc$ is the Dirac delta function centered at location $\vc \in \sX$ and  $\pi \in \sSimplex^N$, and $N$ is the number of locations in the support of $\ProbD$. Finally, given $N$ independent and identically distributed samples $\{x_i\}_{i=1}^{N}$ each defined in a probability space $(\sX,\sB(\sX),\Prob)$, the random samples vector $(x_1,\dots,x_N)\sim \Prob^N$, where $\Prob^N$ is the product probability of $\Prob$ $N$ times, and is defined in the probability space $(\sX^N,\sB(\sX)^N,\Prob^N)$.

\subsection{Wasserstein Distance}\label{prelim:wasserstein-distance}
Let  $\rho \geq 1$, and $\sX\subseteq \realNum^d$. For $m \in \natNum \cup \{ \infty \}$, $\norm{.}$ represent the $L_{m}$-norm. Further, define $\sP_{\rho}(\sX)$ as the set of probability distributions with finite $\rho$-th moments, i.e. all $\Prob \in \sP(\sX)$ are such that $\int_{\sX} \norm{x}^\rho d\Prob(x) < \infty$. Then, for $\Prob, \ProbQ \in \sP_{\rho}(\sX)$ the $\rho$-Wasserstein distance $\wasserstein_{\rho}$ between $\Prob$ and $\ProbQ$ is defined as
\begin{equation} \label{eq:wasserstein-distance-definition}
    \wasserstein_{\rho}(\Prob, \ProbQ) = \sT_\rho(\Prob, \ProbQ)^{\frac{1}{\rho}},
\end{equation}
where 
\begin{align}\label{eq:power-wasserstein-definition}
    \sT_\rho(\Prob, \ProbQ) = \inf_{\gamma \in \Gamma(\mathbb{P}, \mathbb{Q})} \int_{\sX \times \sX} \norm{x-y}^{\rho} d\gamma(x, y),
\end{align}
and $\Gamma(\Prob, \ProbQ) \subset \sP_{\rho}(\sX \times \sX)$ represents the set of joint probability distributions with given marginals $\mathbb{P}$ and $\mathbb{Q}$ (also known as \emph{couplings} between $\mathbb{P}$ and $\mathbb{Q}$), i.e., for all $\gamma \in \Gamma(\Prob, \ProbQ)$, 
\begin{align*}
    \gamma(A \times \sX) = \Prob(A), \; \gamma(\sX \times A) = \ProbQ(A) \qquad \forall A \in \sB(\sX).
\end{align*}
While computing $\wasserstein_{\rho}(\Prob, \ProbQ)$ is generally infeasible, when both distributions are discrete, $\wasserstein_{\rho}(\Prob, \ProbQ)$ can be computed as the solution of a finite linear program (LP) \citep{peyre2019computational}. In particular,  for $\Prob=\sum_{i=1}^M \evomega{i}\delta_{c_i}$ and $\ProbQ=\sum_{i=1}^M \evpi{i}\delta_{z_i}$, we have that
\begin{align}\label{eq:discrete-wasserstein-lp-optimization}
   \sT_{\rho}(\Prob, \ProbQ) &= \inf_{\gamma\geq 0} \sum_{i, j=1}^M \norm{c_i-z_j}^\rho \evgamma{i,j} 
   \\
    &
    \qquad 
    \text{s.t. }\;
    \left\{
    \begin{array}{l}
    \nonumber
    \gamma \bar\indicator = \omega \\
    \gamma^\top \bar\indicator = \pi
    \end{array}
    \right.
\end{align}
which is a linear program in $M^2$ variables.


\section{Theoretical Results}\label{section:main-theoretical}
We start by introducing the theoretical results enabling our framework to efficiently bound the Wasserstein distance between an unknown distribution and one built from data. Then, in Section \ref{section:partition}, we demonstrate how these results can be combined with intelligent clustering algorithms to develop an algorithmic framework for efficiently learning a distribution from data with bounds on the Wasserstein distance.

\subsection{A-Posteriori Probability Bounds}
We consider a vector of independent and identically distributed (i.i.d.) samples $\samplesVector=( \vx_1,...,\vx_N )$ of size $N$ from an unknown distribution $\Prob$ with support contained in a bounded set $\sX \subset \realNum^d$. Our goal is to employ the samples to approximate $\Prob$ with a discrete distribution $\ProbHat$ with error bounds on the $\rho$-Wasserstein distance. To do so, given a partition of $\sX$ into $M$ sets $\big\{ C_1,...,C_M \big\}$ and representative points $\{\vc_1,\hdots,\vc_M\}$ with $\vc_i\in C_i$, we define the following clusterized empirical distribution, where each of the $N$ samples is assigned to its corresponding region, that is, for $\pi^{(i)}= \frac{1}{N}\sum_{n=1}^{N}  \indicator_{C_i}(\vx_n)$ we have
\begin{align}
\label{eq:clusterized-empirical}
\ProbHat=\sum_{i=1}^M\pi^{(i)}  \delta_{c_i},
\end{align}
where $\delta_{c_i}$ is the Dirac delta distribution centered in $c_i$ and we call $\pi=(\pi^{(1)},...,\pi^{(M)})$. Intuitively, $\ProbHat$ is a generalization of the standard empirical distribution  \citep{fournier2015rate}, where samples are clustered per region.
  We should already stress that our results remain valid independently of $M$ and the choice of sets $\big\{ C_1,...,C_M \big\}$ and points $\{\vc_1,\hdots,\vc_M\}$. Nevertheless, in Section \ref{section:partition},  we will devise an algorithmic framework that relies on the results of this Section to automatically select regions $\{C_1,...,C_M\}$ and representative points $\{\vc_1,\hdots,\vc_M\}$ to minimize $ \wasserstein_{\rho}(\Prob, \ProbHat)$.

Existing results to bound $ \wasserstein_{\rho}(\Prob,\ProbHat)$ with high confidence rely on worst-case reasoning to derive uniform (tight) bounds that only depend on $N$, $d,$ and $\mathcal{X}$ (or alternatively on some assumptions on the moments of $\Prob$) \citep{fournier2015rate,weed2019sharp}. In what follows, 
we propose an alternative approach: we bound $ \wasserstein_{\rho}(\Prob,\ProbHat)$ with a high confidence that depends on the collected data. That is, we trade the fact that the confidence can only be computed after the data have been observed, to obtain tighter bounds. Our reasoning is based on the fact that
the error to quantize a distribution $\Prob$ into a discrete distribution is highly dependent on $\Prob$ itself \citep{dereich2013constructive}. For instance, a distribution close to a delta Dirac intuitively requires less data to be approximated with a certain error than a uniform distribution over a large support, and, while we cannot generally make this assumption a priori, we can rely on data to automatically rule out worst-case distributions that would need to instead be taken into account without additional information. Theorem \ref{thm:main-theorem} below formalizes this reasoning by combining recent advances in optimal transport and stochastic optimization with Clopper-Pearson confidence intervals built from data.
\begin{theorem}\label{thm:main-theorem}
 Given a confidence $\beta >0$ and a partition $\big\{ C_1,...,C_M \big\}$ with representative points $\{\vc_1,\hdots,\vc_M\}$, associate to each set $C_i$ parameters $\evlowerb{i},\evupperb{i}$, where $\evlowerb{i}=0$ if $\sum_{n=1}^{N}  \indicator_{C_i}(\vx_n)=0$, otherwise $\evlowerb{i}$ is such that
\begin{equation}\label{eq:clopper-pearson-lower-bound}
    \frac{\beta}{2M}=\sum_{j=\sum_{n=1}^{N}  \indicator_{C_i}(x_n)}^N \binom{N}{j}(\evlowerb{i})^j(1-\evlowerb{i})^{N-j},
\end{equation}
and $\evupperb{i}=1$ if $\sum_{n=1}^{N}  \indicator_{C_i}(x_n)=1$, otherwise $\evupperb{i}$ is such that
\begin{equation}\label{eq:clopper-pearson-upper-bound}
    \frac{\beta}{2M}=\sum_{j=0}^{\sum_{n=1}^{N}  \indicator_{C_i}(\vx_n)} \binom{N}{j}(\evupperb{i})^j(1-\evupperb{i})^{N-j}.
\end{equation}
Then, it holds that $\wasserstein_{\rho}(\Prob,\ProbHat) \leq \epsilon(\samplesVector)$ with probability at least $1-\beta$,
where $\epsilon(\samplesVector)$ is the solution of the following mixed integer linear problem (MILP):
\begin{align}\label{eq:def-epsilon}
\epsilon(\samplesVector)^{\rho}&= \sup_{\substack{
\omega \in \constrainedSimplex{\lowerb}{\upperb}^M, \\
\gamma \in \Delta^{M \times M}
}} \sum_{i=1}^M \sum_{j=1}^M \max_{x \in C_i} \norm{x-c_j}^{\rho} \evgamma{i,j} \\
    &\qquad \text{s.t. }\;
\left\{
\begin{array}{l}
\nonumber
\gamma \bar\indicator = \omega \\
\gamma^\top \bar\indicator = \pi \\
\evgamma{i,i} \geq \min \bigl\{ \evomega{i}, \evpi{i} \bigr\},
\quad \forall i \in [M]
\end{array}
\right.
\end{align}
\end{theorem}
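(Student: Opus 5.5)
Let $\omega^\star\in\sSimplex^M$ with $\evomega{i}_\star = \Prob(C_i)$ denote the true (unknown) masses of the partition cells, and let $\Prob|_{C_i}$ be the normalized restriction of $\Prob$ to $C_i$ when $\evomega{i}_\star>0$. The proof splits into a probabilistic step, which shows $\omega^\star\in\constrainedSimplex{\lowerb}{\upperb}^M$ with probability at least $1-\beta$, and a deterministic step, which shows that on this event $\sT_\rho(\Prob,\ProbHat)\le\epsilon(\samplesVector)^\rho$.

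\textbf{Probabilistic step.} For each $i\in[M]$ the count $\sum_{n}\indicator_{C_i}(\vx_n)$ is $\mathrm{Binomial}(N,\evomega{i}_\star)$. The parameters $\evlowerb{i},\evupperb{i}$ in \eqref{eq:clopper-pearson-lower-bound}--\eqref{eq:clopper-pearson-upper-bound} are exactly the one-sided Clopper--Pearson bounds at level $\beta/(2M)$, with the degenerate conventions $\evlowerb{i}=0$ and $\evupperb{i}=1$ trivially valid. The standard monotonicity-in-$p$ argument for binomial tails then gives
\[
\Prob^N\bigl(\evomega{i}_\star<\evlowerb{i}\bigr)\le \beta/(2M),\qquad \Prob^N\bigl(\evomega{i}_\star>\evupperb{i}\bigr)\le\beta/(2M).
\]
A union bound over the $2M$ events yields $\omega^\star\in\constrainedSimplex{\lowerb}{\upperb}^M$ with probability at least $1-\beta$. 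Membership in the simplex is automatic, since $\omega^\star$ is a probability vector.

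\textbf{Deterministic step.} Fix the samples and assume $\omega^\star$ lies in the box. I construct a coupling of $\Prob$ and $\ProbHat$ in two stages.
\begin{itemize}
\item Take the diagonal-maximal discrete coupling $\gamma^\star$ between $\omega^\star$ and $\pi$: set $\gamma^{\star(i,i)}=\min\{\evomega{i}_\star,\evpi{i}\}$, then distribute the residual masses $\evomega{i}_\star-\gamma^{\star(i,i)}$ and $\evpi{j}-\gamma^{\star(j,j)}$ by any transport plan, e.g.\ a product of normalized residuals.
\item Lift $\gamma^\star$ to a coupling of $\Prob$ and $\ProbHat$ via $\gamma=\sum_{i,j}\gamma^{\star(i,j)}\,\Prob|_{C_i}\otimes\delta_{c_j}$.
\end{itemize}
Its first marginal is $\sum_i\evomega{i}_\star\Prob|_{C_i}=\Prob$ and its second is $\sum_j\evpi{j}\delta_{c_j}=\ProbHat$. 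Its cost satisfies
\[
\int\norm{x-y}^\rho d\gamma=\sum_{i,j}\gamma^{\star(i,j)}\int_{C_i}\norm{x-c_j}^\rho d\Prob|_{C_i}(x)\le\sum_{i,j}\max_{x\in C_i}\norm{x-c_j}^\rho\gamma^{\star(i,j)}.
\]
If the maximum is not attained on a non-closed $C_i$, replace it by a supremum over the closure; this is harmless since $\sX$ is bounded. Because $(\omega^\star,\gamma^\star)$ satisfies every constraint of \eqref{eq:def-epsilon}, the right-hand side is at most $\epsilon(\samplesVector)^\rho$. Hence $\wasserstein_\rho(\Prob,\ProbHat)\le\epsilon(\samplesVector)$ on the event of probability at least $1-\beta$.

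\textbf{Main obstacle.} The delicate point is justifying the extra constraint $\evgamma{i,i}\ge\min\{\evomega{i},\evpi{i}\}$. Without it, the supremum over couplings would only bound the worst plan, which is trivially valid but loose. With it, we must exhibit a feasible plan that also respects the constraint, which the diagonal-first construction does. I will check carefully that the residuals balance, i.e.\ that $\sum_i(\evomega{i}_\star-\gamma^{\star(i,i)})=\sum_j(\evpi{j}-\gamma^{\star(j,j)})$, which holds because both $\omega^\star$ and $\pi$ sum to one. The Clopper--Pearson validity with the stated boundary conventions is routine. The MILP label, with the $\min$ encoded by a binary variable, concerns only computability and does not affect the bound.
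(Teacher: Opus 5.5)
Your proof is correct and follows the same route as the paper. The probabilistic step is the same: Clopper--Pearson bounds with a union bound over the $2M$ one-sided events. The deterministic step uses the same diagonal-first transport plan (the paper's footnote, adapted from Villani's Theorem 6.15) to show that the true cell masses give a feasible point of the MILP.

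The only difference is presentational. You plug in $(\omega^\star,\gamma^\star)$ directly and write out the lift $\sum_{i,j}\gamma^{\star(i,j)}\,\Prob|_{C_i}\otimes\delta_{c_j}$ explicitly. The paper instead first takes a supremum over all distributions in the box and passes through a discrete max--min program, relying on that lifting only implicitly.
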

In Theorem \ref{thm:main-theorem}, we first use the data to restrict with confidence $1-\beta$ to the distributions that assign mass within $ [\evlowerb{i}, \evupperb{i}]$ to $C_i$ for any $i\in\{1,...,M\}$. Then, within this set of distributions, we seek the one that maximizes the Wasserstein distance from $\ProbHat$. Unfortunately, as $\Prob$ is unknown, computing the optimal transport plan that would lead to true Wasserstein distance is infeasible. Furthermore, even if $\Prob$ were known, in general, computing exactly $\wasserstein_{\rho}(\Prob,\ProbHat)$ would lead to an infinite-dimensional optimization problem. To obtain a tractable problem (a MILP) that can be efficiently solved with existing tools, e.g., Gurobi \citep{gurobi}, instead, we relax the transport plan to a sub-optimal one that is constrained to keep as much mass as possible in the same region (i.e., condition $\evgamma{i,i} \geq \min\{ \evomega{i}, \evpi{i} \}$), where the transport cost is minimized.  We should also stress that the confidence in Theorem \ref{thm:main-theorem} is with respect to  $\Prob^{N}$, the product measure of $\Prob$ $N$ times, that is, the probability induced by the sample vector $\samplesVector$. A detailed proof can be found in Section \ref{section:proofs} and for further details, as well as an explicit proof of the measurability of the event $\{ \wasserstein_{\rho}(\Prob,\ProbHat) \leq \epsilon(\samplesVector) \}$, we refer the interested reader to Section \ref{proof:measurability}. 

Note that, as detailed in the proof of Proposition \ref{prop:wasserstein-bound-no-confidence} in the Appendix, the MILP in \eqref{eq:def-epsilon} arises as a relaxation of maximin linear program and by fixing a transport plan that is optimal only in the diagonal elements (i.e., in the $\evgamma{i,i}$). In practice, one could try to fix a transport plan also for the non-diagonal elements of $\gamma,$ instead than considering a worst-case approach as in Theorem \ref{thm:main-theorem}. However, because of the supremum over $\omega$, this would generally require non-linear normalization coefficients that would greatly increase the computational complexity of the resulting problem. Another alternative could be to restrict $\{C_1,...,C_M\}$ to be a dyadic partition and then rely on existing results for this setting (e.g., Proposition 1 in \citet{weed2019sharp}). However, as we will discuss in more detail in Section \ref{section:conclusion}, this restriction would limit the flexibility of the resulting framework.

\begin{remark}[Comparison with existing literature]\label{rmk:comparison-existing-literature}
We would like to remark that Theorem \ref{thm:main-theorem} should not be considered as a replacement for the convergence rates provided in \citep{fournier2015rate} or \citep{weed2019sharp}, which have been shown to be tight. In fact, our bound serves a different purpose: by providing error bounds that are a function of the data, Theorem \ref{thm:main-theorem} aims to obtain tighter non-asymptotic bounds for a specific given problem. This is substantially different from the approach and problem taken in \citep{fournier2015rate,weed2019sharp}, where the resulting convergence rates are valid for any distribution with a specific support size. Consequently, it is natural that by focusing on any possible distribution, the resulting bounds (on average) will be worse than those in Theorem \ref{thm:main-theorem}, where the information given by the specific data is exploited to narrow the set of distributions that must be considered in the bounds by solving a set of optimization problems. In Section \ref{section:experiments}, we will present an empirical comparison of these results with our framework.
\end{remark}

\subsection{Computational Complexity and Further Relaxations}\label{rmk:computational-efficiency}
At first glance, the reader may find Theorem \ref{thm:main-theorem} computationally expensive due to the need to solve a MILP to compute $\epsilon(\samplesVector)$. However, we would like to remark that the MILP in \eqref{eq:def-epsilon} has a complexity that only depends on $M$, the support of $\ProbHat$, which is generally much smaller than $N$. In practice, even if solving a MILP has a worst case theoretical complexity that is exponential in the number of integer variables, due to the presence of various heuristics for this class of problems, solving \eqref{eq:def-epsilon} using state-of-the-art tools such as Gurobi \citep{gurobi} and a common machine can be generally performed in a few seconds for $M<200$ (a value of $M$ that generally suffices for obtaining relatively small errors with our framework, as we will show in Section \ref{section:experiments}). 
To further improve scalability and remove the worst case exponential complexity in $M$, in Proposition
\ref{prop:milp-into-lp} below we show that \eqref{eq:def-epsilon} can be relaxed into a MILP with only one integer independently of $M$ and $N$.
\begin{proposition}\label{prop:milp-into-lp}  
Consider the setting in Theorem \ref{thm:main-theorem}. Let $v = \arg\min_{z \in \text{Vert}\big( \constrainedSimplex{\lowerb}{\upperb}^M \big)} \norm{z - \pi}$ be the closest vertex of set $\constrainedSimplex{\lowerb}{\upperb}^M$ to $\pi$.
    Denote $I, J \subset [M]$ as the sets of indexes such that $\evv{i}=\evlowerb{i}$ for $i\in I$, and $\evv{j}=\evupperb{j}$ for $j\in J$. Then, it holds that
    \begin{align}\label{eq:eps-in-lp}
        \wasserstein_{\rho}(\Prob,\ProbHat) &\leq \wasserstein_\rho(\sum_{i=1}^M \evv{i}\delta_{c_i}, \ProbHat) + \xi(\samplesVector),
    \end{align}
    with probability at least $1-\beta$, where $\xi(\samplesVector)$ is the solution to the following MILP with one integer variable:
    \begin{align}\label{eq:def-xi}
\xi(\samplesVector)^\rho &=\sup_{\substack{
\omega \in \constrainedSimplex{\lowerb}{\upperb}^M, \\
\gamma \in \Delta^{M \times M}
}} \sum_{i=1}^M \sum_{j=1}^M \max_{x \in C_i} \norm{x-c_j}^{\rho} \evgamma{i,j} \\
&\qquad \text{s.t. }\;
\left\{
\begin{array}{l}
\nonumber
\gamma \bar\indicator = \omega \\
\gamma^\top \bar\indicator = v \\
\evgamma{i,i} \geq \evv{i}, \quad \forall i \in I \\
\evgamma{j,j} \geq \evomega{j}, \quad \forall j \in J \\
\evgamma{f,f} \geq \min \bigl\{ \evomega{f}, \evv{f} \bigr\} \text{ for the free index } f = [M] \setminus (I \cup J)
\end{array}
\right.
\end{align}
\end{proposition}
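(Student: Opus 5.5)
Let $\ProbHat_v := \sum_{i=1}^M \evv{i}\delta_{c_i}$. The plan is to insert $\ProbHat_v$ between $\Prob$ and $\ProbHat$ and use the triangle inequality for $\wasserstein_\rho$:
\begin{align*}
\wasserstein_\rho(\Prob,\ProbHat) \le \wasserstein_\rho(\Prob,\ProbHat_v) + \wasserstein_\rho(\ProbHat_v,\ProbHat).
\end{align*}
The second term is the computable quantity in \eqref{eq:eps-in-lp}. It remains to show that, on the same event of probability at least $1-\beta$ used in Theorem \ref{thm:main-theorem}, we have $\wasserstein_\rho(\Prob,\ProbHat_v)\le \xi(\samplesVector)$. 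That event is the one on which the Clopper--Pearson intervals hold simultaneously for all $i$, i.e., $\omega^* := (\Prob(C_1),\dots,\Prob(C_M)) \in \constrainedSimplex{\lowerb}{\upperb}^M$; it is obtained by a union bound over the $2M$ one-sided tails. On this event the argument is deterministic.

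To bound $\sT_\rho(\Prob,\ProbHat_v)$, I will follow the proof of Proposition \ref{prop:wasserstein-bound-no-confidence}, with $\pi$ replaced by $v$. That argument shows that for every coupling matrix $\gamma$ with row sums $\omega^*$ and column sums $v$, one can build a coupling of $\Prob$ and $\ProbHat_v$ that sends mass $\evgamma{i,j}$ from $\Prob|_{C_i}$ to $c_j$. Such a coupling costs at most $\sum_{i,j}\max_{x\in C_i}\norm{x-c_j}^\rho\evgamma{i,j}$. Hence $\sT_\rho(\Prob,\ProbHat_v)$ is bounded by this cost for any admissible $\gamma$, and so by the supremum over $\omega\in\constrainedSimplex{\lowerb}{\upperb}^M$ of the cost of any chosen feasible $\gamma$.

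The key step is to check that the diagonal constraints in \eqref{eq:def-xi} are exactly $\evgamma{i,i}\ge\min\{\evomega{i},\evv{i}\}$, so that the feasible set coincides with that of \eqref{eq:def-epsilon} with $\pi\to v$. This is what makes the problem a valid upper bound. For $i\in I$ we have $\evv{i}=\evlowerb{i}\le\evomega{i}$, so the minimum equals $\evv{i}$. For $j\in J$ we have $\evv{j}=\evupperb{j}\ge\evomega{j}$, so the minimum equals $\evomega{j}$. Any vertex of the box-constrained simplex has at most one coordinate strictly between its bounds, since the polytope is defined by one equality and $2M$ box inequalities. That coordinate is the free index $f$, and only it keeps a $\min$. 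Therefore the feasible set of \eqref{eq:def-xi} contains a $\gamma$ with the prescribed diagonal mass for every $\omega$, and $\sT_\rho(\Prob,\ProbHat_v)\le \xi(\samplesVector)^\rho$.

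For the one-integer claim, the single remaining $\min$ is modeled with one binary variable selecting which branch is active, using big-$M$ constraints with bound $1$, since all variables lie in $[0,1]$. All other constraints are linear. The choice of $v$ as the vertex closest to $\pi$ only affects tightness and plays no role in validity.

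The main obstacle I expect is the degenerate case where some coordinate of $v$ lies at both bounds ($\evlowerb{i}=\evupperb{i}$), or where there is no free index. The index sets must then be assigned consistently, but either assignment gives a correct linearization. I also need to make sure the coupling construction of Proposition \ref{prop:wasserstein-bound-no-confidence} carries over verbatim with $v$ in place of $\pi$. It should, because that construction only uses that the column marginal is a probability vector supported on the points $c_j\in C_j$.
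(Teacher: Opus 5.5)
Your proposal is correct and follows essentially the same route as the paper. It uses the triangle inequality through $\sum_{i=1}^M \evv{i}\delta_{c_i}$, then the argument of Proposition~\ref{prop:wasserstein-bound-no-confidence} with $\pi$ replaced by $v$ on the Clopper--Pearson event, and finally the observation that $\evomega{i}\in[\evlowerb{i},\evupperb{i}]$ fixes which branch of each $\min$ is active for $i\in I$ and $j\in J$. Your extra details (a vertex has at most one coordinate strictly inside its bounds, the explicit lifting of a matrix $\gamma$ to a coupling, and the big-$M$ encoding of the remaining $\min$) only make explicit what the paper leaves implicit.
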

A detailed proof of Proposition \ref{prop:milp-into-lp} can be found in Section \ref{section:proofs}, where we rely on the triangle inequality with the distribution $\sum_{i=1}^M \evv{i}\delta_{c_i}$ to be able to reduce the number of integer variables in the resulting optimization problem for $\xi(\samplesVector)$, while limiting the conservativeness introduced. In fact, Proposition \ref{prop:milp-into-lp} allows one to relax the MILP in \eqref{eq:def-epsilon} into a LP with $M^2$ variables (needed to compute $\wasserstein_\rho(\sum_{i=1}^M \evv{i}\delta_{c_i}, \ProbHat)$, see Section \ref{prelim:wasserstein-distance}) and an MILP with $1$ integer variable and $M^2+M$ continuous variables. As we will show in Section \ref{section:experiments}, this has the effect of greatly improve the computational efficiency at the cost of slightly more conservative bounds.


\begin{remark}[Finding the vertex $v$] Proposition \ref{prop:milp-into-lp} is stated for a specific $v \in \text{Vert}\big( \constrainedSimplex{\lowerb}{\upperb}^M \big)$, the one that minimizes the distance to $\pi$. However, we should stress that Proposition \ref{prop:milp-into-lp} would hold for any other vertex of $\constrainedSimplex{\lowerb}{\upperb}^M$. Yet, choosing  $v = \arg\min_{z \in \text{Vert}\big( \constrainedSimplex{\lowerb}{\upperb}^M \big)} \norm{z - \pi}$ yields a good performance in practice. In fact, not only having $v$ close to $\pi$ helps reducing the conservativeness of the problem, but finding $v = \arg\min_{z \in \text{Vert}\big( \constrainedSimplex{\lowerb}{\upperb}^M \big)} \norm{z - \pi}$ can be done particularly efficiently via a sorting algorithm. In fact, as minimizing $\norm{z-\pi}$ is equivalent to minimizing $\norm{z-\pi}^m = \sum_{i=1}^M |\evz{i}-\evpi{i}|^m$, we can first start choosing $\evv{i}=\evlowerb{i}$, sort indexes by the gap $|\evpi{i}-\evupperb{i}|^m-|\evpi{i}-\evlowerb{i}|^m$ ascendingly, and then progressively increase $\evv{i}$ (capping at $\evlowerb{i}$) for the sorted indexes until the condition $\sum_{i=1}^M \evv{i}=1$ is met. Intuitively, this algorithm fills up the probability mass in the coordinates where the distance to the upper bound is smaller than to the lower bound.
\end{remark}

\subsection{Convergence Guarantees}\label{subsection:convergence-guarantees}
Even if our bounds are specifically designed for non-asymptotic analysis, before introducing an algorithmic framework to automatically design regions $\{C_1,...,C_M\}$, we need to discuss the behavior of the bounds in Theorem \ref{thm:main-theorem} and Proposition \ref{prop:milp-into-lp} when the number of samples increases. In fact, while, for a fixed confidence, increasing $N$ has the effect of shrinking all intervals $[\evlowerb{i}, \evupperb{i}]$ (see \eqref{eq:clopper-pearson-lower-bound} and \eqref{eq:clopper-pearson-upper-bound}), increasing $M$ on one side increases the support of $\ProbHat$ $-$ reducing the quantization error, but simultaneously linearly increases the number of events that must be considered in the probability computations (see Appendix \ref{section:proofs}). In Proposition \ref{prop:convergence-bound-n-m} below, we show that, despite this tension, under standard consistency assumptions in the partition, if $M=N^{\alpha}$ for any $\alpha\in (0,1)$, then the bounds in Theorem \ref{thm:main-theorem} and Proposition \ref{prop:milp-into-lp} always converge to $0$ increasing $N$ and $M$.
\begin{proposition}\label{prop:convergence-bound-n-m}
   Given an error threshold $\delta \in (0, 1)$, for $\alpha \in (0,1)$ assume that $M= N^\alpha$ and that the partition $\{ C_i \}_{i=1}^M$ is such that $\ProbHat(\cup_{i=1}^{M-1}C_i) \geq 1-\frac{\delta^\rho}{2\norm{\sX}^\rho}$ and   $C_M=\sX \setminus \cup_{i=1}^{M-1}C_i$. Further, assume that $\max_{i \in [M-1]}\norm{C_i} \leq  \frac{L}{M}$ for a constant $L>0$. Then, there exists $N^* \in \natNum$ such that for any $N \geq N^*$, it holds that 
   $\epsilon(\samplesVector)\leq \delta$ and $\wasserstein_\rho(\sum_{i=1}^M \evv{i}\delta_{c_i}, \ProbHat) + \xi(\samplesVector)\leq \delta$, where $\epsilon(\samplesVector)$ and $\xi(\samplesVector)$ are as defined in Theorem \ref{thm:main-theorem} and Proposition \ref{prop:milp-into-lp}, respectively.
\end{proposition}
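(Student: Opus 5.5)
Both bounds will be controlled by one deterministic estimate of the MILP value. Any mass that is not kept on the diagonal of $\gamma$ pays at most $\norm{\sX}^\rho$. Mass that stays on the diagonal in a region $C_i$ with $i\le M-1$ pays at most $\norm{C_i}^\rho\le (L/M)^\rho$, since $c_i\in C_i$. Finally, the diagonal entry for $C_M$ pays at most $\norm{\sX}^\rho$ times $\min\{\omega^{(M)},\pi^{(M)}\}\le \pi^{(M)}=1-\ProbHat(\cup_{i<M}C_i)\le \delta^\rho/(2\norm{\sX}^\rho)$.

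For any feasible $(\omega,\gamma)$ in \eqref{eq:def-epsilon}, the constraint $\gamma^{(i,i)}\ge\min\{\omega^{(i)},\pi^{(i)}\}$ makes the off-diagonal mass at most $\sum_i(\pi^{(i)}-\min\{\omega^{(i)},\pi^{(i)}\})\le \frac12\sum_i|\omega^{(i)}-\pi^{(i)}|\le \frac12\sum_i(\evupperb{i}-\evlowerb{i})$. The last step holds because both $\omega$ and $\pi$ lie in $[\lowerb,\upperb]$ componentwise; for $\pi$ this is a standard property of Clopper--Pearson intervals. Putting the pieces together gives
\begin{align*}
\epsilon(\samplesVector)^\rho\le \Big(\frac{L}{M}\Big)^\rho+\frac{\delta^\rho}{2}+\frac{\norm{\sX}^\rho}{2}\sum_{i=1}^M(\evupperb{i}-\evlowerb{i}).
\end{align*}
For $\xi(\samplesVector)$ the same argument applies with $\pi$ replaced by $v$. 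The constraints on $I$, $J$ and $f$ still force $\gamma^{(i,i)}\ge\min\{\omega^{(i)},v^{(i)}\}$. For the index $M$, either $v^{(M)}\le\pi^{(M)}+(\evupperb{M}-\evlowerb{M})$ or one bounds $\min\{\omega^{(M)},v^{(M)}\}\le \evupperb{M}$; both are absorbed into the width term. In addition, $\wasserstein_\rho(\sum_i v^{(i)}\delta_{c_i},\ProbHat)^\rho\le\norm{\sX}^\rho\cdot\frac12\sum_i|v^{(i)}-\pi^{(i)}|$, which is again bounded by the total interval width. Taking $\rho$-th roots is harmless, because $(a+b)^{1/\rho}\le a^{1/\rho}+b^{1/\rho}$ lets me split $\delta$ between the two terms. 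I would therefore aim for the $\epsilon$-bound with room to spare, for example by using $\delta/2$ in place of $\delta$ in the remaining terms.

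The main step is to show that the total width $\sum_{i=1}^M(\evupperb{i}-\evlowerb{i})\to0$ deterministically when $M=N^\alpha$ with $\alpha<1$. I will use a standard bound on Clopper--Pearson widths, for instance via the Hoeffding/Chernoff inversion. Each interval at level $\beta/(2M)$ is contained in $[\pi^{(i)}-t,\pi^{(i)}+t]$ intersected with $[0,1]$, where $t=\sqrt{\log(4M/\beta)/(2N)}$. That gives a total width of order $M\sqrt{\log M/N}$, which only tends to zero when $\alpha<1/2$. So for general $\alpha<1$ I need the sharper Bernstein-type width $O\big(\sqrt{\pi^{(i)}\log(M/\beta)/N}+\log(M/\beta)/N\big)$, which follows from the relative Chernoff bound applied to the binomial tails in \eqref{eq:clopper-pearson-lower-bound}--\eqref{eq:clopper-pearson-upper-bound}. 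Summing over $i$ and using Cauchy--Schwarz, $\sum_i\sqrt{\pi^{(i)}}\le\sqrt M$, gives
\begin{align*}
\sum_{i=1}^M(\evupperb{i}-\evlowerb{i})\lesssim \sqrt{\frac{M\log(M/\beta)}{N}}+\frac{M\log(M/\beta)}{N}=O\big(N^{(\alpha-1)/2}\sqrt{\log N}\big)\to0.
\end{align*}
Combined with $(L/M)^\rho=L^\rho N^{-\alpha\rho}\to0$, this yields an $N^*$ beyond which the right-hand side is at most $\delta^\rho$, with the constant factors arranged so that both claimed bounds hold.

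I expect this width estimate to be the main obstacle. It requires deriving the Clopper--Pearson bounds carefully, including the empty-cell and single-sample conventions: for $k=0$ the width is about $\log(2M/\beta)/N$, which is consistent with the bound above. The estimate must also be uniform in the data, since the proposition is a deterministic statement for all $N\ge N^*$.
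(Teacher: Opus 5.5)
Your proof is correct, and it follows the same architecture as the paper's:
\begin{itemize}
\item split the MILP objective into diagonal mass and off-diagonal mass;
\item on the diagonal, cells $i<M$ cost at most $\norm{C_i}^\rho$, and cell $M$ is controlled by $\evpi{M}\le\delta^\rho/(2\norm{\sX}^\rho)$;
\item off-diagonal mass costs at most $\norm{\sX}^\rho$ times the total Clopper--Pearson width, using $\evpi{i}\in[\evlowerb{i},\evupperb{i}]$;
\item $\wasserstein_\rho(\sum_i\evv{i}\delta_{c_i},\ProbHat)$ is handled with a diagonal-preserving plan;
\item the total width vanishes via a $\sqrt{\evpi{i}}$-weighted width bound summed with Cauchy--Schwarz.
\end{itemize}

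The execution differs in two places. On the deterministic side, you use directly that the marginals give $\evgamma{i,i}\le\min\{\evomega{i},\evpi{i}\}$. The diagonal then contributes at most $(L/M)^\rho+\delta^\rho/2$ in one step. The paper instead bounds $\evgamma{i,i}\le\evupperb{i}$, splits $\evupperb{i}=\evlowerb{i}+(\evupperb{i}-\evlowerb{i})$, and uses $\evlowerb{i}\le\evpi{i}$; this is equivalent up to constants. Your $(L/M)^\rho$ is also the correct form of the paper's $L/M$.

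The more substantive difference is the width lemma. The paper quotes Thulin's large-$N$ expansion $\evupperb{i}-\evlowerb{i}\approx\frac{2}{\sqrt N}\mathrm{Probit}(1-\frac{\beta}{2M})\sqrt{\evpi{i}(1-\evpi{i})}+\frac1N$, together with $\mathrm{Probit}(1-\frac{\beta}{2M})=O(\sqrt{\log M})$. You instead derive a non-asymptotic bound from the binomial KL tail. For example, $\mathrm{KL}(q\,\|\,p)\ge(q-p)^2/(2\max\{p,q\})$ gives $\evupperb{i}-\evlowerb{i}\le 2\sqrt{2\evpi{i}\log(2M/\beta)/N}+2\log(2M/\beta)/N$. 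The paper's route is shorter because it relies on a cited expansion. Yours is explicitly uniform in the data-dependent $\evpi{i}$, including cells with only a few samples. It is also uniform in the level $\beta/(2M)$, which shrinks with $N$. That uniformity is exactly what a deterministic ``for all $N\ge N^*$'' statement requires, and an asymptotic expansion with an untracked $O(N^{-3/2})$ remainder does not supply it by itself.

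On the conventions you flag: the clause ``$\evupperb{i}=1$ if the count is $1$'' in Theorem~\ref{thm:main-theorem} must be read as count $=N$. The paper's proof treats the cases $\evpi{i}\in\{0,1\}$, and for count $N$ the defining equation has no solution. Under the literal reading, a singleton cell would have width close to $1$, and no bound of your form could hold. Under the intended reading, your Chernoff bound covers every case.
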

\begin{figure}[htbp]
    \centering
    \begin{subfigure}[t]{0.24\textwidth}
        \centering
        \includegraphics[width=\linewidth]{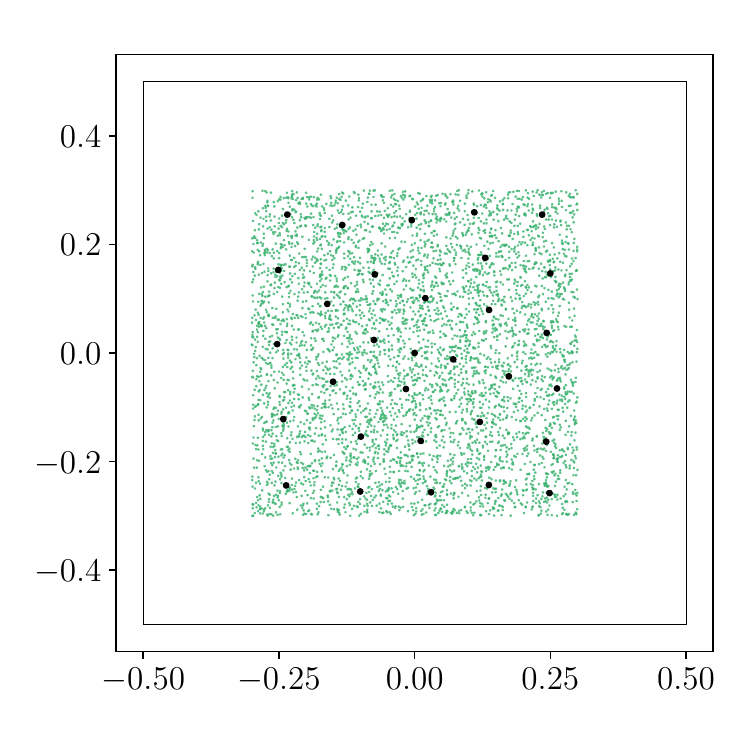}
        \caption{Clustering}
        \label{subfig:step1}
    \end{subfigure}
    \begin{subfigure}[t]{0.24\textwidth}
        \centering
        \includegraphics[width=\linewidth]{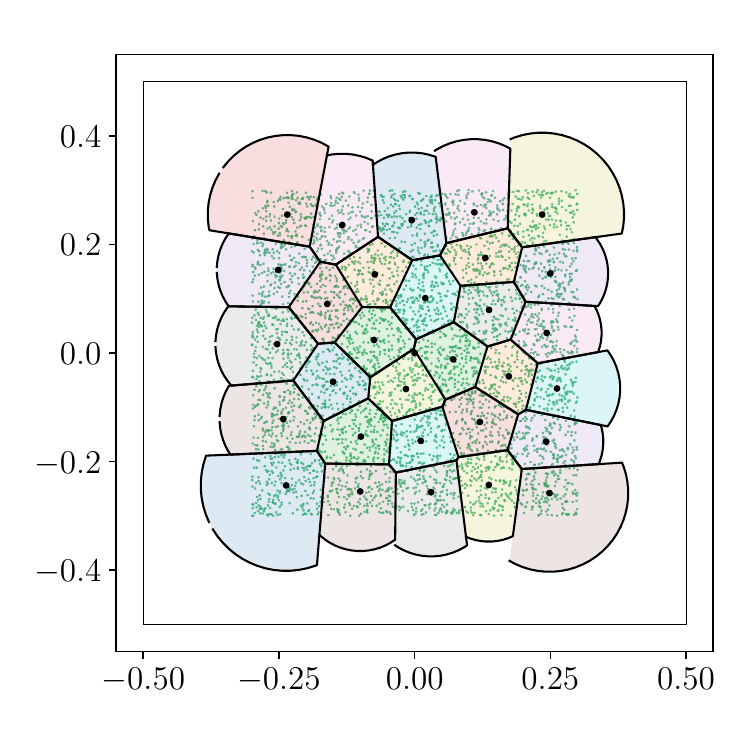}
        \caption{Partitioning}
        \label{subfig:step2}
    \end{subfigure}
    \begin{subfigure}[t]{0.24\textwidth}
        \centering
        \includegraphics[width=\linewidth]{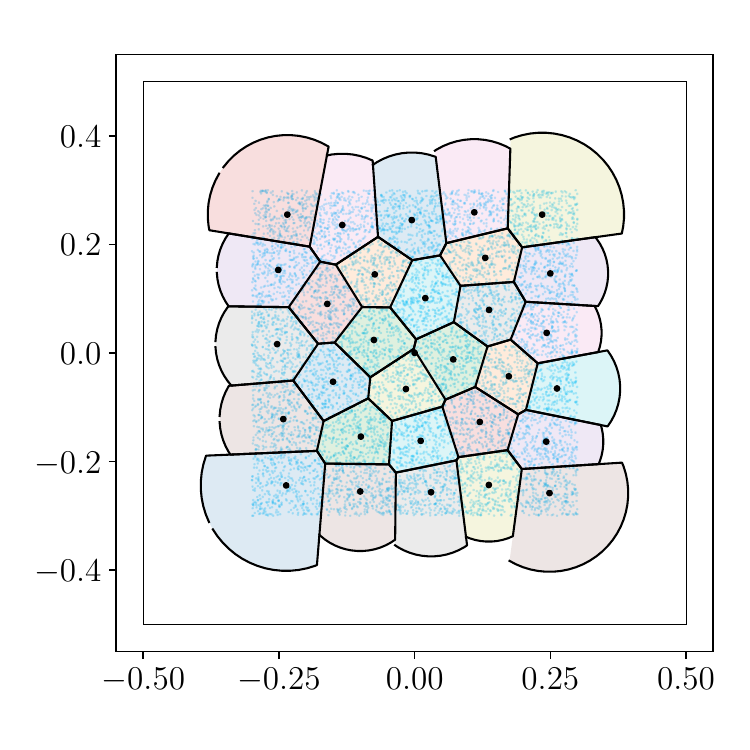}
        \caption{Counting Samples}
        \label{subfig:step3}
    \end{subfigure}
    \begin{subfigure}[t]{0.24\textwidth}
        \centering
        \includegraphics[width=\linewidth]{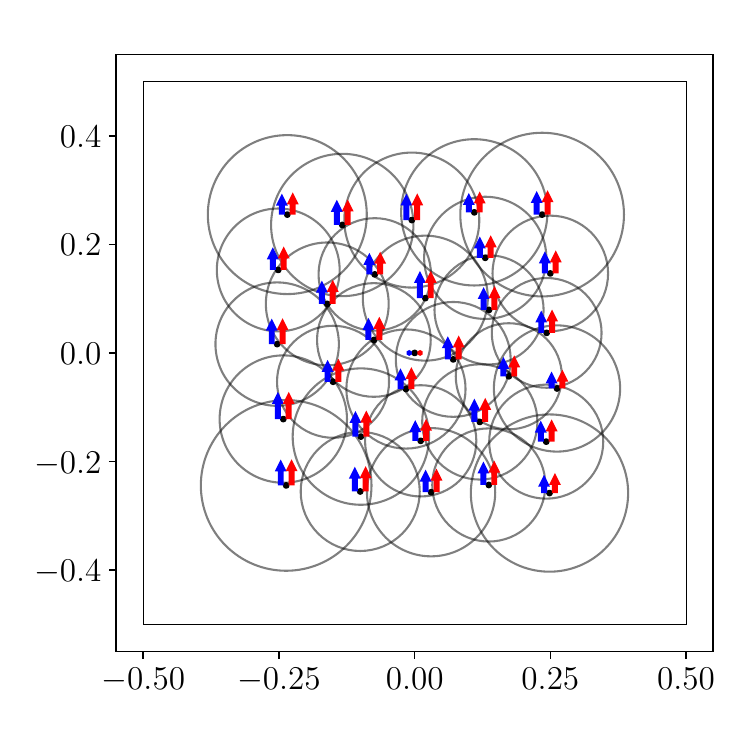}
        \caption{
        Confidence
        }
        \label{subfig:step4}
    \end{subfigure}
    \caption{
    Illustration of the data-driven construction of the discrete approximation $\ProbHat$ as defined in \eqref{eq:clusterized-empirical} with $M=30$ for an unknown 2D uniform distribution $\Prob$ with true support $[0.3, 0.3]^2$ and assumed support $\sX=[0.5, 0.5]^2$ (black box), using two independent datasets $\sD_{N_{\mathrm{train}}}$ ($N_{\mathrm{train}}=5 \times 10^3$, green dots) and $\sD_N$ ($N=10^4$, blue dots).
    (\ref{subfig:step1}) $(M-1)$-means clustering is applied to $\sD_{N_{\mathrm{train}}}$ to obtain representative points $\{c_i\}_{i=1}^{M-1}$ (black dots).
    (\ref{subfig:step2}) Radii $\{r_i\}$ are computed based on $\sD_{N_{\mathrm{train}}}$ as in Algorithm~\ref{alg:approximate-construction}, lines 3-5, defining regions $\{C_i\}_{i=1}^{M-1}$ (colored areas). 
    (\ref{subfig:step3}) The probability vector $\vpi$ is computed by counting, for each region, the coverage of dataset $\sD_N$ (blue dots).
    (\ref{subfig:step4}) Clopper-Pearson confidence intervals $[\evlowerb{i}, \evupperb{i}]$ for the probability of each region $i$ are computed according to \eqref{eq:clopper-pearson-lower-bound} and~\eqref{eq:clopper-pearson-upper-bound} (blue and red arrows). 
    The regions are contained within $L_m$-balls centered at $\vc_i$ with radii $r_i$ (grey circles).
    }
    \label{fig:approximate-construction}
\end{figure}

\section{\changed{Partition Construction}}\label{section:partition}
We now describe our approach to select the partition $\{C_1,\ldots,C_M\}$ and representative points $\{c_1,\ldots,c_M\}$ to minimize $\wasserstein_\rho(\Prob,\ProbHat)$. To do so, we make use of a held out training dataset $\sD_{N_{\mathrm{train}}} = (\vx_1,\ldots,\vx_{N_{\mathrm{train}}})$ different and independent from $\sD_N$\footnote{The use of a held out dataset $\sD_{N_{\mathrm{train}}}$ guarantees that $\{C_1,\ldots,C_M\}$ and $\{c_1,\ldots,c_M\}$ are independent of $\sD_N$, which is currently needed to rely on the Clopper-Pearson confidence intervals in our framework. }, and use the approach illustrated in Figure~\ref{fig:approximate-construction} and summarized in Algorithm~\ref{alg:approximate-construction}.
Specifically, as $\Prob$ is unknown, we select $\{C_1,\ldots,C_{M-1}\}$ and representative points $\{c_1,\ldots,c_{M-1}\}$ to minimize the Wasserstein distance between the empirical distribution induced by $\sD_{N_{\mathrm{train}}}$ and its clusterization, namely
\begin{equation}\label{eq:objective_partition}
    \wasserstein_\rho\left(\frac{1}{N_{\mathrm{train}}}\sum_{i=1}^{N_{\mathrm{train}}}\delta_{\vx_i}, \sum_{i=1}^{M-1}\elem{\tilde\vpi}{i}\delta_{\vc_i}\right),
\end{equation}
where $\elem{\tilde\vpi}{i}=\tfrac{1}{N_{\mathrm{train}}}\sum_{n=1}^{N_{\mathrm{train}}}\indicator_{C_i}(\vx_n)$.
The advantage of considering the objective in \eqref{eq:objective_partition} is that for a fixed set of representative points $\{c_1,\ldots,c_{M-1}\}$, the partition $\{C_1,\ldots,C_{M-1}\}$ minimizing \eqref{eq:objective_partition} is the $L_m$-norm Voronoi partition induced by these points on the (unknown) support of $\Prob$ (see, e.g., Lemma~3.1 in \citet{canas2012learning}). 
As shown in Eqn~(2) in \citet{canas2012learning}, the $\{c_1,\ldots,c_{M-1}\}$ minimizing \eqref{eq:objective_partition} are then given by the centroids of an optimal $(M-1)$-means clustering of $\sD_{N_{\mathrm{train}}}$. 
We therefore use Lloyd’s algorithm \citep{lloyd1982least} in line~1 of Algorithm~\ref{alg:approximate-construction} to find $\{c_1,\ldots,c_{M-1}\}$. 
The corresponding optimal regions $\{C_1,\ldots,C_{M-1}\}$ would be given by the $L_m$-norm Voronoi partition on the support of $\Prob$, which, however, is unknown and only assumed to be contained in $\sX$. 
Using data in $\sD_{N_{\mathrm{train}}}$, we therefore approximate the support of $\Prob$ as a union of $L_m$-balls centered at the representative points, i.e., we define
\[
    \sX_{\mathrm{approx}}=\bigcup_{i=1}^{M-1} \{\vx\in\realNum^d  \; : \;  \|\vx-\vc_i\|\leq r_i\},
\]
where the radii $r_i$ are determined as described in lines 3-5 of Algorithm~\ref{alg:approximate-construction}. In particular, first, $r_i$ is initialized as the maximum distance between the representative point $\vc_i$ and the samples in $\sD_{N_\mathrm{train}}$ assigned to $\vc_i$ by the $(M-1)$-means clustering. 
Then, to account for the fact that the balls of radius $r_i$ centered at the training samples may not cover the support of $\Prob$, the radius is increased to be at least half the distance to representative points associated with adjacent regions. Since computing the exact neighboring regions is computationally demanding for large $M$ and $d$, this neighborhood is approximated by the $\lceil 0.05\,M\rceil$-th nearest representative point, a heuristic that is empirically observed to provide good coverage (see Appendix~\ref{appendix:partition-analysis} for an empirical evaluation).
Defining the regions $\{C_1,\ldots,C_{M-1}\}$ as the Voronoi partition on the approximate support $\sX_{\mathrm{approx}}$ then yields
\begin{equation}\label{eq:approximate_voronoi_regions}
    C_i = \big\{\vx\in\realNum^d  \; : \;  \|\vx-\vc_i\| \leq \min\{\|\vx-\vc_j\|, r_i\}, \forall j\neq i \big\}.
\end{equation}
Finally, since $\sX_{\mathrm{approx}}$ is only an estimate of the true support and may be strictly contained in the assumed support $\sX$, we include the uncovered region $\sX \setminus \sX_{\mathrm{approx}}$ as an additional region $C_M$, with a representative point chosen as the center of $\sX$ (Algorithm~\ref{alg:approximate-construction}, line~7).
This ensure that the resulting collection $\{C_1,\hdots,C_{M-1}, C_M\}$ forms a partition of the assumed support $\sX$. 
An illustration of the resulting partition and representative points for different values of $M$ for a 2D Gaussian Mixture is shown in Figure~\ref{fig:example_quantizations}. 
Once $\{C_1,\ldots,C_M\}$ and $\{c_1,\ldots,c_M\}$ are selected, in line 6 of Algorithm \ref{alg:approximate-construction}, for $i\in\{1,...,M-1\}$, we   compute the weights $\evpi{i}$ defining $\ProbHat$ by assigning each sample in $\sD_n$ to the closest point $\vc_i$ within a radius $r_i$. Any point not assigned to any region is then assigned to $\vc_M$ (line 7).
For an an extensive quantitative analysis of the performance of Algorithm~\ref{alg:approximate-construction}, we refer the reader to Appendix~\ref{appendix:partition-analysis}.

\begin{remark}
    Note that the regions $\{C_1,\ldots,C_M\}$ are defined  by the points $\{\vc_1,\hdots,\vc_M\}$ and the associated radii $\{r_1,\hdots,r_M\}$, as defined in Algorithm \ref{alg:approximate-construction}. The quantities $r_i$ can also be used to over-approximate cost terms $\max_{x\in C_i} \|x-c_j\|$ appearing in~\eqref{eq:def-epsilon} of Theorem~\ref{thm:main-theorem}  (and Proposition~\ref{prop:milp-into-lp}). In practice, as we discuss in Appendix~\ref{appendix:partition-analysis}, this can lead to increased computational efficiency, especially for large $M$ and high dimensions $d$. 
\end{remark}

\begin{algorithm}[htbp]
\DontPrintSemicolon
\SetKwInOut{Input}{input}\SetKwInOut{Output}{output}
\caption{Data-driven construction of a discrete approximation $\ProbHat$ to $\Prob$}\label{alg:approximate-construction}
\Input{Set $\sX\subset\realNum^d$, i.i.d. samples $\sD_{N_\mathrm{train}}$, $\sD_N$ from $\Prob$, support size $M$ of $\ProbHat$.}
\Output{Discrete approximation $\ProbHat=\sum_{i=1}^{M}\evpi{i}\delta_{\vc_i}$}
Apply $(M-1)$-means clustering (Lloyd’s algorithm) to $\sD_{N_\mathrm{train}}$ and obtain centroids $\{\vc_i\}_{i=1}^{M-1}$\;
\For{$i = 1,\ldots,M-1$}{
    Set $r_i = \max\{\|\vx-\vc_i\| \mid \vx\in\sD_{N_\mathrm{train}} \text{ assigned to } \vc_i\}$\;
    Let $d_i$ be the distance from $\vc_i$ to its $\lceil 0.05\,M\rceil$-th nearest neighbor in $\{\vc_j\}_{j\neq i}^{M-1}$\;
    Update $r_i \leftarrow \max\{r_i,\tfrac{1}{2}d_i\}$\;
    Set $\evpi{i} = \frac{1}{N}\sum_{\vx\in\sD_N}
    \mathbbm{1}\!\left\{
    \|\vx-\vc_i\|\le r_i \;\text{and}\;
    \|\vx-\vc_i\|\le \|\vx-\vc_j\| \;\forall j\neq i
    \right\}$\;
}
Set $\evpi{M} \leftarrow 1-\sum_{i=1}^{M-1}\evpi{i}$ and $\vc_M$ to the (Chebyshev) center of $\sX$\;
\end{algorithm}

\begin{figure}[htbp]
    \centering
    \begin{subfigure}[t]{0.32\textwidth}
        \centering
        \includegraphics[width=\linewidth]{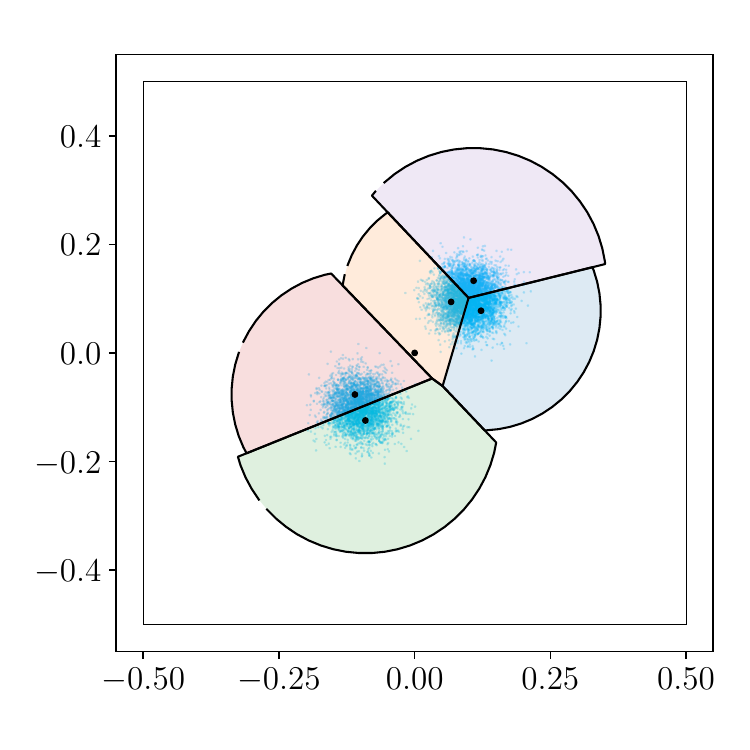}
        \caption{$M=5$}
    \end{subfigure}
    \begin{subfigure}[t]{0.32\textwidth}
        \centering
        \includegraphics[width=\linewidth]{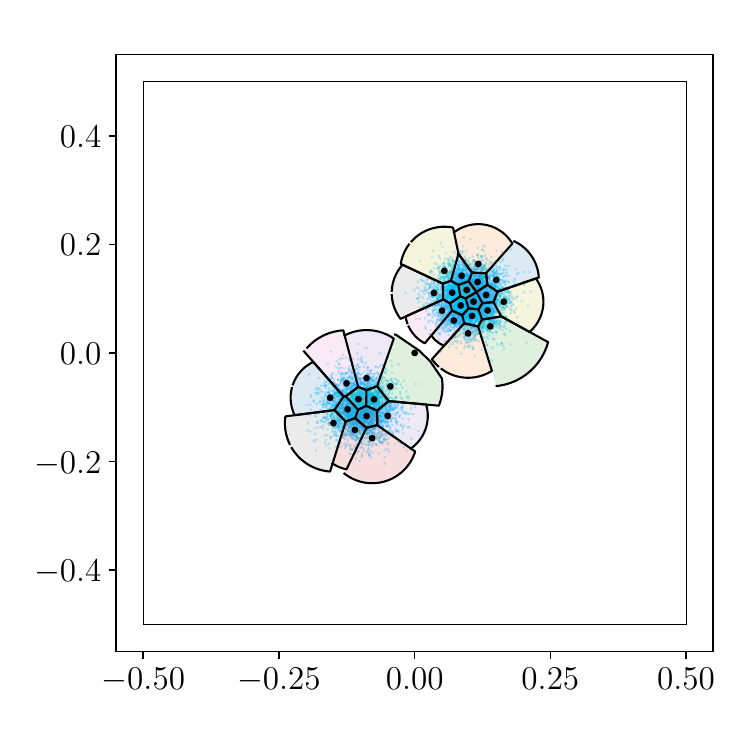}
        \caption{$M=30$}
    \end{subfigure}
    \begin{subfigure}[t]{0.32\textwidth}
        \centering
        \includegraphics[width=\linewidth]{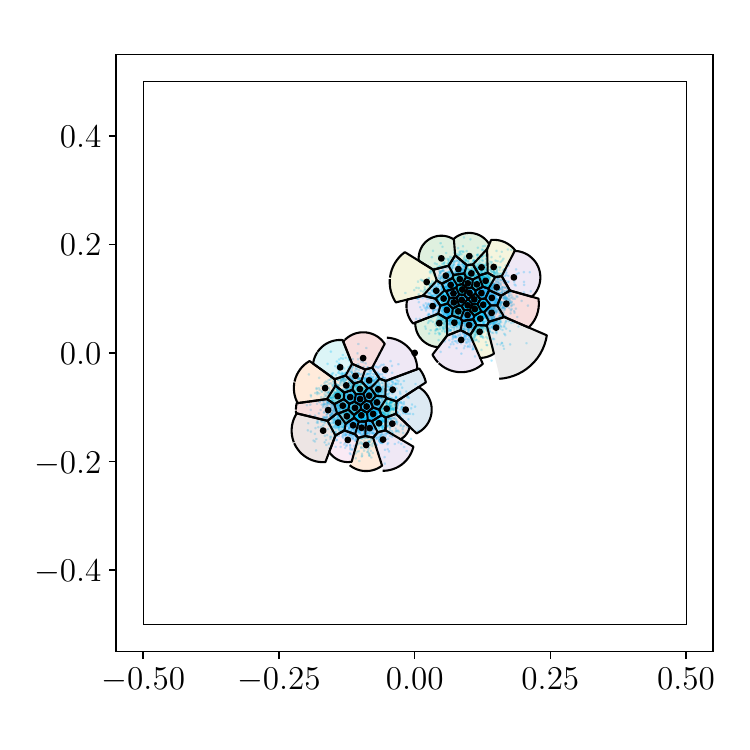}
        \caption{$M=75$}
    \end{subfigure}
    \caption{
    Illustration of the resulting representation points $\{\vc_i\}_{i=1}^M$ (black dots) and partition $\{C_i\}_{i=1}^M$ for different values of $M$ using $N_{\mathrm{train}}=5\times 10^3$ samples from an unknown bimodal 2D Gaussian mixture distribution with support truncated to $\sX=[-0.5,0.5]^2$. The independent dataset $\sD_N$ with $N=10^4$ (blue dots) is shown for validation of the resulting partition.
    }
    \label{fig:example_quantizations}
\end{figure}

\begin{figure}[htbp]
    \centering
    \begin{subfigure}[t]{0.32\textwidth}
        \centering
        \includegraphics[width=\linewidth]{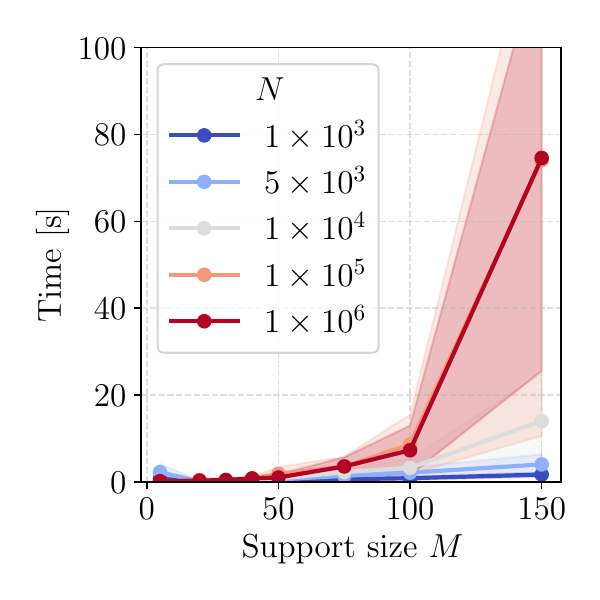}
        \caption{2D - Using Theorem \ref{thm:main-theorem} 
        }
        \label{fig:computation-times-thm-small-2D}
    \end{subfigure}
    \begin{subfigure}[t]{0.32\textwidth}
        \centering
        \includegraphics[width=\linewidth]{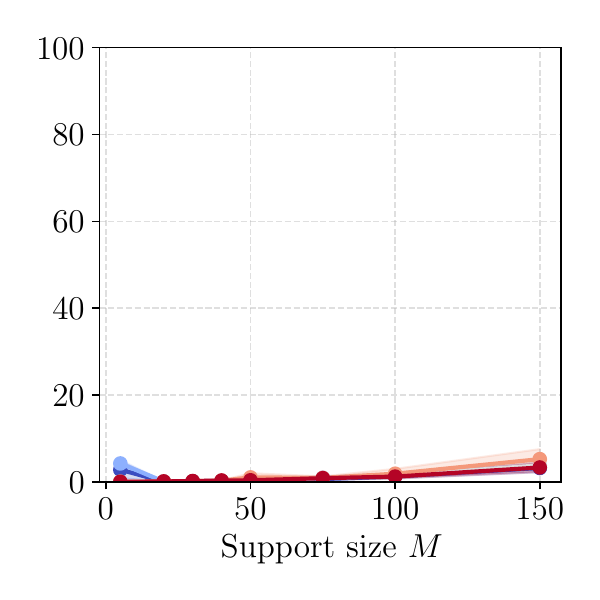}
        \caption{2D - Using Proposition \ref{prop:milp-into-lp}}
        \label{fig:computation-times-prop-small-2D}
    \end{subfigure}
    \begin{subfigure}[t]{0.32\textwidth}
        \centering
        \includegraphics[width=\linewidth]{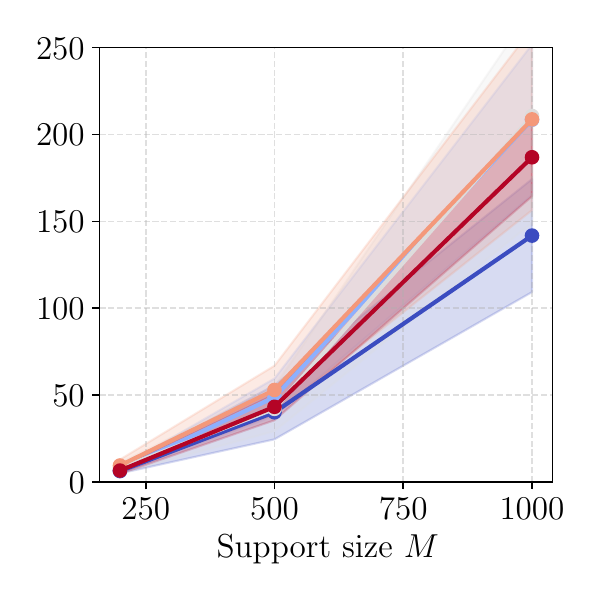}
        \caption{2D - Using Proposition \ref{prop:milp-into-lp}}
        \label{fig:computation-times-prop-large-2D}
    \end{subfigure}
    \begin{subfigure}[t]{0.32\textwidth}
        \centering
        \includegraphics[width=\linewidth]{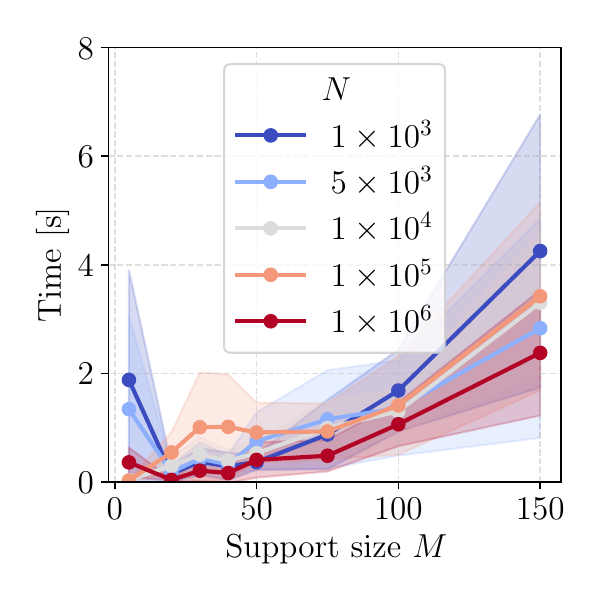}
        \caption{100D - Using Theorem \ref{thm:main-theorem} 
        }
        \label{fig:computation-times-thm-small-100D}
    \end{subfigure}
    \begin{subfigure}[t]{0.32\textwidth}
        \centering
        \includegraphics[width=\linewidth]{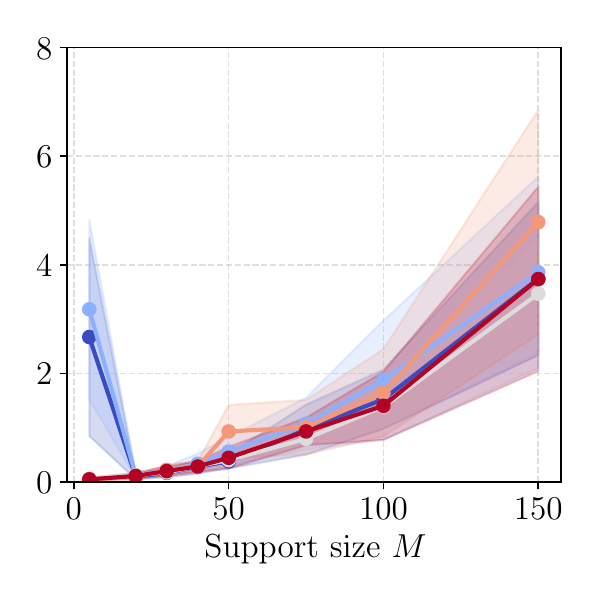}
        \caption{100D - Using Proposition \ref{prop:milp-into-lp}}
        \label{fig:computation-times-prop-small-100D}
    \end{subfigure}
    \begin{subfigure}[t]{0.32\textwidth}
        \centering
        \includegraphics[width=\linewidth]{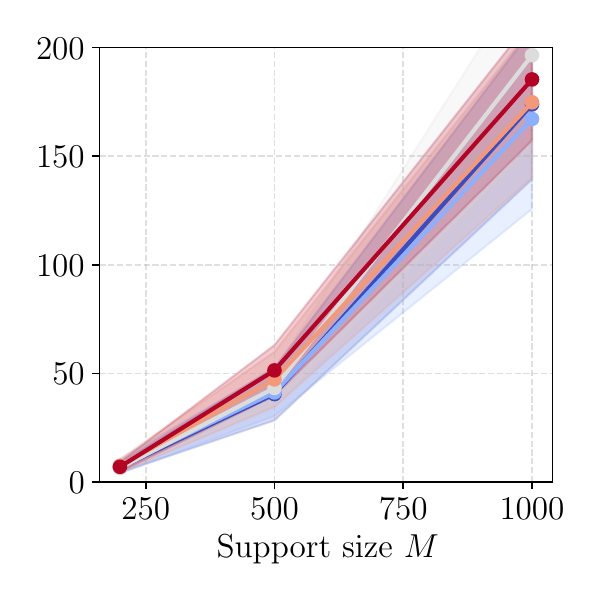}
        \caption{100D - Using Proposition \ref{prop:milp-into-lp}}
        \label{fig:computation-times-prop-large-100D}
    \end{subfigure}
    \caption{The computation times as the support size $M$ increases for multiple number of samples $N$ in the 2D and 100D Gaussian setting (for $\rho=2$).
    }
    \label{fig:computation-time}
\end{figure}

\section{Experimental Results}\label{section:experiments}
We empirically evaluate the performance of our distribution-learning approach by first comparing the bound in Proposition~\ref{prop:milp-into-lp} with  Theorem~\ref{thm:main-theorem}. 
We then benchmark the proposed bounds against state-of-the-art Wasserstein guarantees on both synthetic and real-world datasets, highlighting how the properties of the underlying distribution affect performance. 
\changed{
Our synthetic experiments consider truncated isotropic Gaussian distributions with dimensions ranging from $d=2$ to $d=100$ and varying variances, as well as uniform distributions with varying support sizes over the same range of dimensions. Real-world benchmarks include two UCI regression datasets \citep{dos2024co,miniboone_uci} and the safety-critical MedMNIST dataset \citep{medmnistv2}.

For all experiments, the support $\sX$ is taken as the $L_\infty$-ball of diameter $1$ centered at the origin. The partition is constructed using an independent training set of size $N_{\text{train}} \in \{1000, 5000\}$, and all confidence bounds are computed with confidence level $\beta = 10^{-6}$. The $\rho$-Wasserstein distance is defined with respect to the Euclidean norm (i.e., $m=2$ in~\eqref{eq:wasserstein-distance-definition}). 
Each experiments is repeated ten times with different random seeds, and results are reported as means with shaded bands indicating one standard deviation. 
Computations are performed on a HPC node with 8 CPU cores and 32\,GB RAM (Intel Xeon Gold 6148 @ 2.40\,GHz). 
The code used to produce the results in this work is available at \url{https://github.com/EduardoFMDCosta/WassersteinDistributionLearning}.
}

\subsection{MILP vs LP Relaxation}\label{subsection-experiment:milp-vs-lp-results}
To improve the scalability of our framework, we introduced Proposition~\ref{prop:milp-into-lp} as a relaxation of Theorem~\ref{thm:main-theorem}, replacing the MILP in \eqref{eq:def-epsilon} with a formulation that contains only a single integer variable. Therefore,  we compare the two formulations in terms of the computational scalability and tightness of the resulting bounds; in this subsection  we focus on computational complexity,  while the tightness of the bounds is studied in Section \ref{subsection-experiment:comparing-fournier}.
Figure~\ref{fig:computation-time} reports the computation time for varying support sizes $M$ of $\ProbHat$ and increasing numbers of samples $N$, for both a 2D and 100D Gaussian distributions, using at most eight CPU cores and 32~GB RAM\footnote{A similar trend is observed also for all the other benchmarks considered in this work.}. 
From Figure~\ref{fig:computation-time}, we observe that for small support sizes of $\ProbHat$, both formulations exhibit comparable performance. 
The difference in computational scaling becomes evident at larger support sizes ($M\geq100$ for 2D and $M \geq 500$ for 100D). 
In this regime, the computation time associated with Proposition~\ref{prop:milp-into-lp} increases with~$M$ but remains within a few minutes for the largest support sizes considered (see Figures~\ref{fig:computation-times-prop-large-2D} and \ref{fig:computation-times-prop-large-100D}), whereas the MILP formulation of Theorem~\ref{thm:main-theorem} frequently exceeds the $10$-minute timeout limit for most random seeds. 
Importantly, for Proposition~\ref{prop:milp-into-lp}, and for any fixed $M$, the computation time remains largely stable as $N$ increases, confirming that the complexity of the relaxed formulation is primarily governed by the support size rather than by the sample size. 
In contrast, the MILP formulation of Theorem~\ref{thm:main-theorem} exhibits a substantial spread in computation times in the 2D setting (see Figure~\ref{fig:computation-times-thm-small-2D}), indicating a strong sensitivity to the partition-induced MILP structure, with run-times ranging from easily solvable instances to near worst-case behavior.
This computational behavior is representative of all subsequent experiments in this section: for larger support sizes $M$, Proposition~\ref{prop:milp-into-lp} remains tractable on the order of minutes, whereas the formulation in Theorem \ref{thm:main-theorem}  can become impractical. 
While the relaxation in Proposition~\ref{prop:milp-into-lp} is necessarily more conservative than the MILP formulation in Theorem~\ref{thm:main-theorem}, this effect is limited in practice. As we will show in Section \ref{subsection-experiment:comparing-fournier}, for comparable support sizes $M$, the relaxed formulation typically produces bounds within $10-30\%$ of the MILP-based ones, while exhibiting the same qualitative trends.

\begin{figure}[htbp]
    \centering
    \begin{subfigure}[t]{0.39\textwidth}
        \centering
        \includegraphics[width=\linewidth]{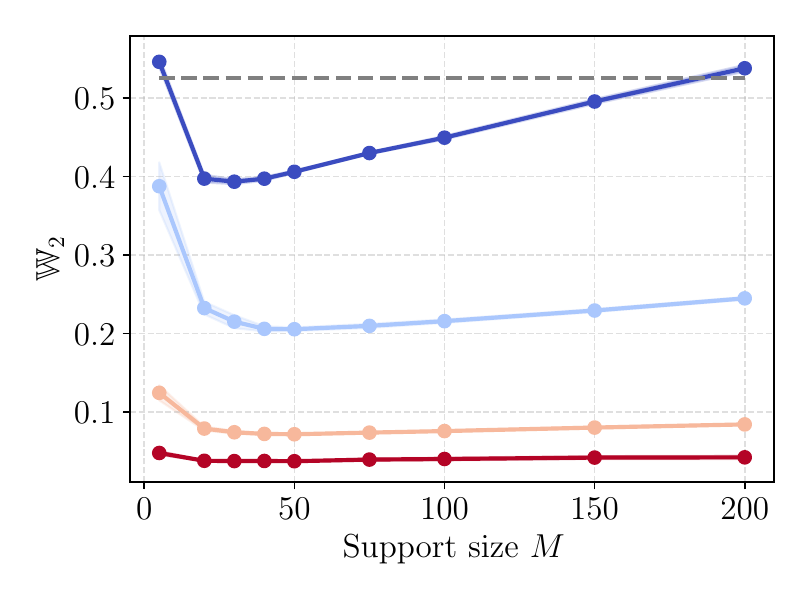}
        \caption{2D - Using Theorem \ref{thm:main-theorem}}
        \label{fig:variance-analysis-a}
    \end{subfigure}
    \begin{subfigure}[t]{0.39\textwidth}
        \centering
        \includegraphics[width=\linewidth]{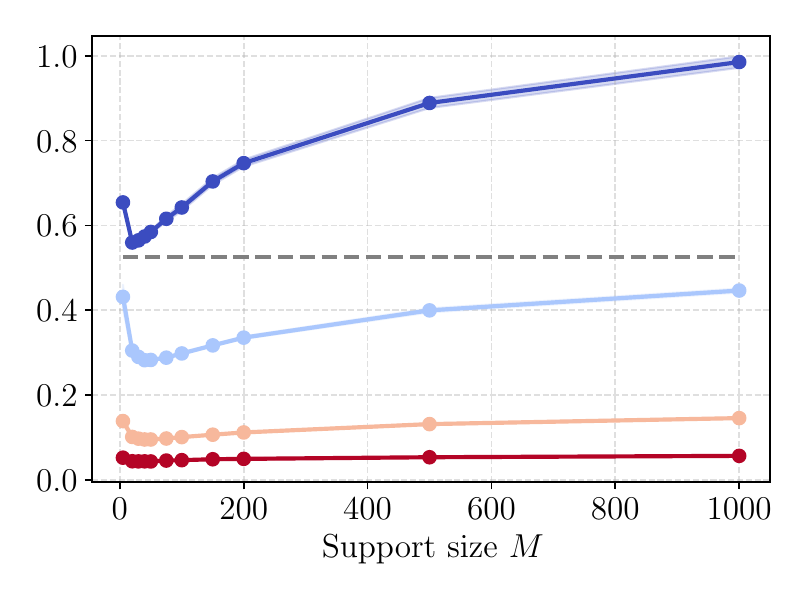}
        \caption{2D - Using Proposition \ref{prop:milp-into-lp}}
        \label{fig:variance-analysis-b}
    \end{subfigure}
    \begin{subfigure}[t]{0.20\textwidth}
        \centering
        \includegraphics[width=\linewidth]{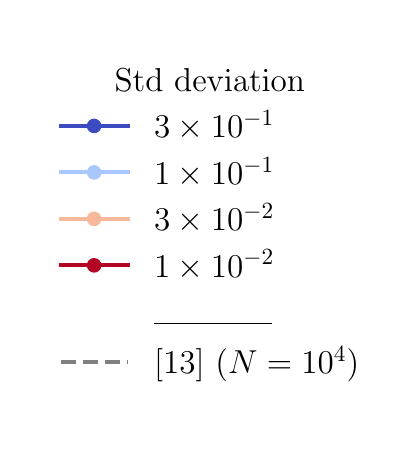}
    \end{subfigure}
    \begin{subfigure}[t]{0.39\textwidth}
        \centering
        \includegraphics[width=\linewidth]{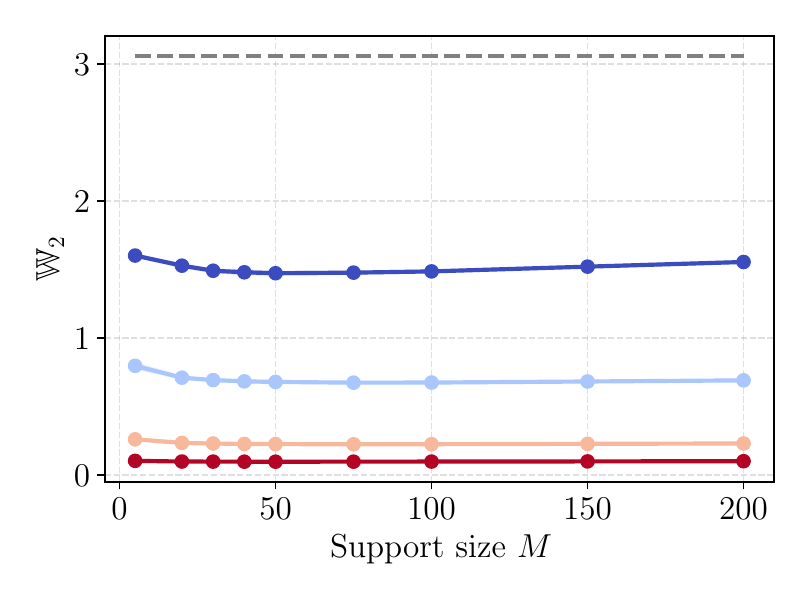}
        \caption{10D - Using Theorem \ref{thm:main-theorem}}
        \label{fig:variance-analysis-c}
    \end{subfigure}
    \begin{subfigure}[t]{0.39\textwidth}
        \centering
        \includegraphics[width=\linewidth]{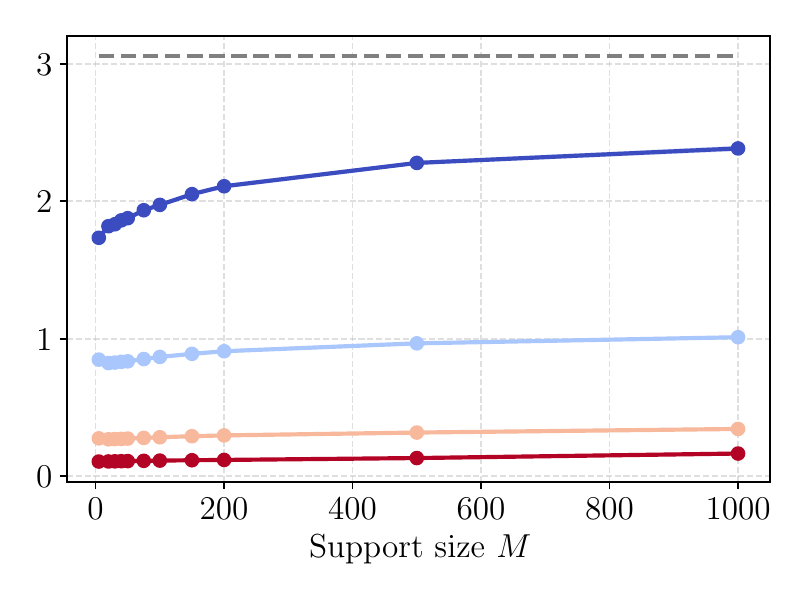}
        \caption{10D - Using Proposition \ref{prop:milp-into-lp}}
    \end{subfigure}
    \begin{subfigure}[t]{0.20\textwidth}
        \phantom{\includegraphics[width=\linewidth]{Figures/Variance/variance_legend.pdf}}
    \end{subfigure}
    \caption{Ours and Fournier bounds (dashed line) for $\rho=2$ for 2D and 10D isotropic Gaussian distributions with decreasing variance for each dimension for $N=10^4$.}
    \label{fig:variance-analysis}
\end{figure}

\subsection{Comparison with the State-of-the-art}\label{subsection-experiment:comparing-fournier}
We now turn to an empirical comparison with state-of-the-art bounds commonly considered and applied in the literature (e.g. \citet{boskos2023high,mohajerin2018data,gracia2024data}), henceforth referred to as \emph{Fournier bound}, as based on the moment bounds introduced in \citet{fournier2015rate} (a detailed derivation of the bounds is reported for completeness in Appendix~\ref{appendix:compare-bounds-existing-literature}).
We should already stress this comparison is not totally fair with respect to our method, as it compares a distribution with support size $N$ (the empirical) against the clusterized empirical distribution produced by our approach, whose support size is $M$ $-$ typically much smaller.\footnote{A more equitable comparison would further compress the empirical distribution to one of size $M$, and then add the discretization error to the Fournier bounds, as in \citet{gracia2024data}. However, this would require solving a linear program with $N\times M$ variables (see \eqref{eq:discrete-wasserstein-lp-optimization}), which would be computationally infeasible for some of the $N$ and $M$ considered here.} Nevertheless, we believe that showing that our approach not only produces distributions with much smaller support but also yields much tighter Wasserstein-distance error bounds would effectively showcase its potential. 
We first consider synthetic distributions with well-controlled structure, which allows us to systematically analyze how properties of the underlying distribution, such as variance and effective dimensionality, affect the resulting bounds. We then extend the comparison to real-world datasets, where the true data-generating distribution is unknown.

\subsubsection{On the Effect of the Variance}\label{subsection-experiment:variance-analysis}
We consider $\Prob$ to be an isotropic Gaussian distribution and analyze the effect of the variance of the underlying distribution~$\Prob$ on the resulting bounds. This provides a controlled setting to isolate how the concentration of probability mass influences the performance of the proposed data-driven bounds.
Notably, the Fournier bounds are agnostic to~$\Prob$ and depend only on the number of samples~$N$ and the ambient dimension, whereas our approach explicitly adapts to the empirical distribution of the data.
As a consequence, our method particularly benefits from low-variance settings, where most of the probability mass is concentrated in a small region of the support. In this regime, the partitioning strategy yields regions with small radii, leading to substantially less conservative bounds.
This behavior is illustrated in Figure~\ref{fig:variance-analysis}, which shows that decreasing the variance results in a marked improvement of our bounds, while the Fournier bounds remain unchanged. Figures~\ref{fig:variance-analysis-a}, \ref{fig:variance-analysis-b}, and~\ref{fig:variance-analysis-c} further highlight the multiscale behavior discussed in the introduction.
In particular, the bound decreases rapidly in the low-$M$ regime, reflecting the ability of the method to exploit coarse quantizations when the distribution is strongly concentrated. The subsequent increase for larger~$M$, which results from the shrinking confidence intervals associated with each partition element, is analyzed in detail in Appendix~\ref{subsection-experiment:finding-optimal-quantization} and is due to the fact that all experiments are performed for a fixed $N=10^4$, whereas, as already discussed in Section \ref{subsection:convergence-guarantees}, to benefit from a larger support $M$, larger $N$ are also needed to mitigate the effect of the use of Clopper-Pearson confidence intervals in each of the resulting $M$ regions.

\subsubsection{On the Effect of the Dimensionality}
We next investigate the effect of the dimension on the relative performance of the proposed bounds and the Fournier bounds. Figure~\ref{fig:comparison-fournier} reports the ratio between our bounds and the Fournier bounds as the dimension increases, for different sample sizes~$N$. 
Despite the unfavorable comparison setup (the $M$ considered in all cases in Figure~\ref{fig:comparison-fournier} is at least two orders of magnitude smaller than $N$, which is also the support of the empirical distribution we are comparing to), our approach consistently outperforms the state-of-the-art across most considered settings. Furthermore, we should stress that to expose the relative dimensional trend, we consider Gaussian distributions with a non-small standard deviation. For smaller standard deviations, as illustrated in the previous section, the ratio would be even smaller already for small $d$.
We note that the ratio between the proposed bounds and the Fournier bounds remains below~$1$ for all but very low-dimensional regimes with extremely large sample sizes, and decreases further as the dimension increases. 
For instance, the improvement reaches nearly a fourfold reduction for the 100D Uniform distribution and up to a ninefold reduction for the Gaussian case.
This behavior can be explained by the different scaling of the two approaches with respect to the dimension. The Fournier bounds scale as $N^{-1/d}$, reflecting the need to discretize the entire space~$\sX$ uniformly, whereas our method refines the discretization only in regions where probability mass is observed. Moreover, unlike the Fournier bounds, the proposed bounds depend explicitly on the underlying distribution~$\Prob$, which explains the different trends observed between the Uniform and Gaussian cases.

\begin{figure}[htbp]
    \centering
    \begin{subfigure}[t]{0.45\textwidth}
        \centering
        \includegraphics[width=\linewidth]{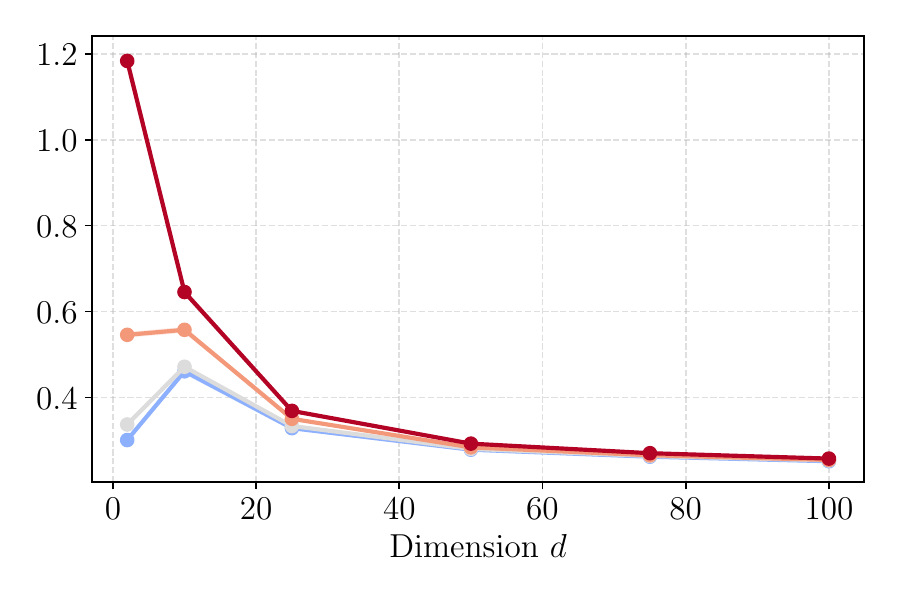}
        \caption{
        \changed{Uniform - Using Theorem \ref{thm:main-theorem}}
        }
        \label{fig:uniform-a}
    \end{subfigure}
    \begin{subfigure}[t]{0.45\textwidth}
        \centering
        \includegraphics[width=\linewidth]{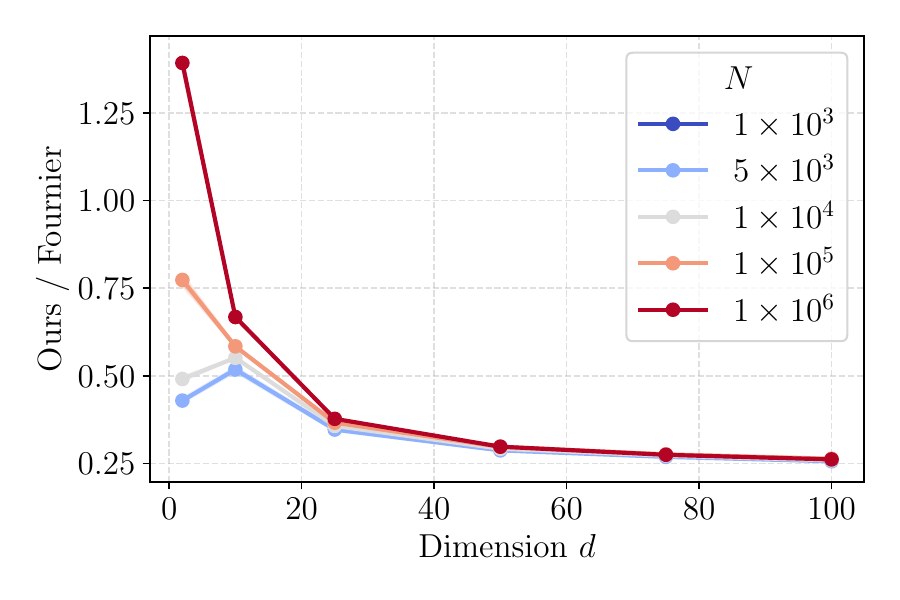}
        \caption{
        \changed{Uniform - Using Proposition~\ref{prop:milp-into-lp}}
        }
        \label{fig:uniform-b}
    \end{subfigure}

    \begin{subfigure}[t]{0.45\textwidth}
        \centering
        \includegraphics[width=\linewidth]{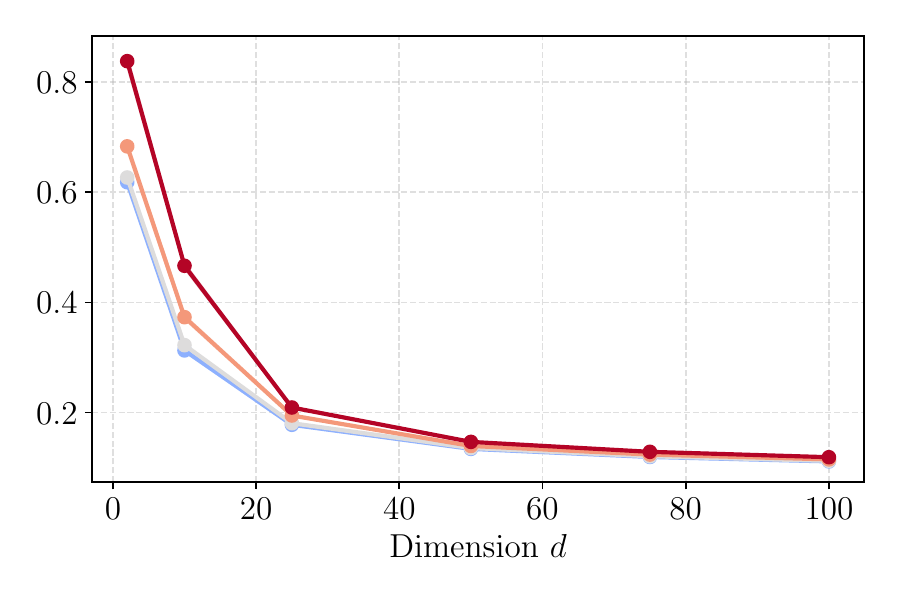}
        \caption{
        \changed{Gaussian - Using Theorem \ref{thm:main-theorem}}
        }
        \label{fig:gaussian-c}
    \end{subfigure}
    \begin{subfigure}[t]{0.45\textwidth}
        \centering
        \includegraphics[width=\linewidth]{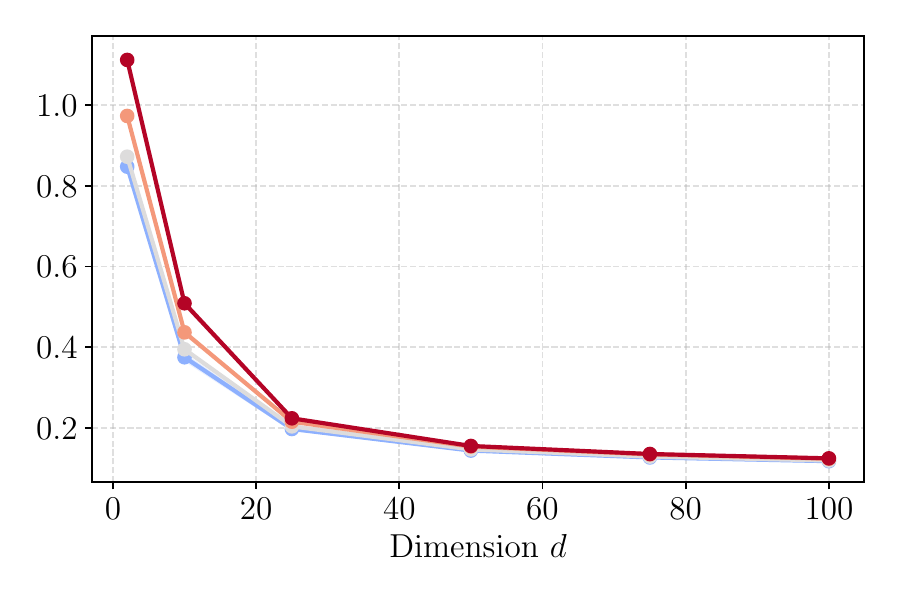}
        \caption{
        \changed{Gaussian - Using Proposition~\ref{prop:milp-into-lp}}
        }
        \label{fig:gaussian-d}
    \end{subfigure}
    \caption{
    The ratio between ours and Fournier bounds for increasing dimension $d$ for a Uniform (for $\rho=1$) and isotropic Gaussian (for $\rho=2$) distribution. 
    In $2$D, the Uniform distribution has an $L_\infty$-ball support of diameter $0.2$, and the Gaussian distribution has standard deviation $0.2$ per dimension. 
    For higher dimensions, the support diameter and variance are scaled to preserve a fixed probability mass ratio w.r.t. the support $\sX$ across dimensions (see Section~\ref{subsection:experimental-details}).
    The support size $M$ is selected from a grid ranging from $5$ to $1000$ so as to minimize the bound for each setting, as reported in Table~\ref{table:uniform_gaussian_over_dims} in the Appendix. 
    }
    \label{fig:comparison-fournier}
\end{figure}

\subsubsection{Application to Real-World Datasets}
We finally evaluate the proposed framework on real-world datasets, where the true data-generating distribution is unknown and the ambient dimension can be large. Table~\ref{tab:datasets} reports the resulting guarantees for two UCI regression benchmark datasets and one safety-critical MedMNIST dataset, and compares them with the corresponding Fournier bounds. 
These results illustrate that classical Wasserstein guarantees that depend explicitly on the ambient dimension are often overly conservative, as real data typically exhibit structure that is not captured by worst-case assumptions.
A particularly illustrative example is the MiniBooNE dataset, where, despite an ambient space of $d=50$, the proposed approach yields a bound of approximately $0.53$ using Theorem~\ref{thm:main-theorem} with a compact support of size $M=100$, compared to a Fournier bound of $11.77$ based on the empirical distribution of $N\approx1.25\times10^5$ samples. 
We also note that OCTMNIST constitutes a particularly challenging benchmark due to its high dimensionality and complex structure, which is reflected in the comparatively large Wasserstein distances reported in Table~\ref{tab:datasets}. Despite this increased difficulty, the proposed bounds remain significantly tighter than the Fournier bounds and with an approximating distribution of support of around three orders of magnitude smaller.

\begin{table}[htbp]
    \centering
    \begin{tabular}{c c c c c c c c} \hline
    Dataset & $d$ & $N_{\mathrm{train}}$ & $N$ & $M$ & Thm.~\ref{thm:main-theorem} & Prop.~\ref{prop:milp-into-lp} & Fournier ($N$) \\ \hline
    Emissions& 11    & 5000  & 31733 & 100   & $0.76 \pm 0.01$  & $1.03 \pm 0.01$ & 2.96 \\
    MiniBooNE & 50    & 5000  & 125064 & 100   & $0.53 \pm 0.01$  & $0.82 \pm 0.01$  & 11.77 \\
    OCTMNIST & 784   & 5000  & 92477 & 100   & $6.92 \pm 0.08$  & $8.66 \pm 0.17$  & 58.87 \\
    \hline
    \end{tabular}
    \caption{
    Ours and Fournier bounds for $\rho=2$ on real-world datasets: Emissions \citep{dos2024co}, MiniBooNE \citep{miniboone_uci}, and OCTMNIST \citep{medmnistv2}.
    }
    \label{tab:datasets}
\end{table}

\section{Conclusion}\label{section:conclusion}

In this work, we introduced a data-driven framework for learning probability distributions from a finite amount of data. Given i.i.d. observations from an underlying distribution $\Prob$, our approach relies on clustering algorithm and optimal transport to construct a discrete approximation $\ProbHat$ whose $\rho$-Wasserstein distance to $\Prob$ can be bounded with high confidence by solving a finite mixed-integer linear optimization problem. Our results show that our approach generally yields substantially less conservative bounds than state-of-the-art methods, while returning distributions of substantially more compact support. 

A central objective of our approach was not only to limit the total number of data used, but also to control the support size of the resulting $\ProbHat$. In fact, this is often critical to guarantee the tractability of downstream computations, such as when the resulting bounds are used in distributionally robust optimization frameworks \citep{mohajerin2018data} or to propagate uncertainty through non-linear functions \citep{figueiredo2025efficient}. To this end, we employed a modified version of Lloyd's algorithm to find our partition of $\sX$. An alternative would have been to instead rely on a dyadic partition of a set containing most of the probability mass of $\Prob$ (for instance, the complement of set $C_M$ introduced in Section \ref{section:partition}). Such a strategy would allow us to leverage known Wasserstein distance bounds for dyadic optimal transport (see, e.g., Proposition 1 in \citet{weed2019sharp}). However, this would also require the construction of a hierarchical partition in cells of decreasing diameter, typically resulting in substantially more than $M$ regions for the error levels considered in our experiments. 
Nevertheless, we see combining our results with existing results for hierarchical optimal transport as an exciting research direction.

This work also opens several additional promising research directions. First, the bound in Theorem \ref{thm:main-theorem} may be significantly tightened by directly solving the min-max optimization problem appearing in \eqref{Eqn:DiscreteTransportInfimumDiscrete}, rather than the relaxation considered in this paper. While this may increase the resulting computational complexity, tailored approaches to solve directly this problem may be beneficial.  Second, while in this work we assumed a bounded support; it remains an interesting future work to extend our results for unbounded (or completely unknown) support. Furthermore, while our approach is tailored for Wasserstein distances, it can also be in principle extended to other optimal transport distances, such as Sinkhorn Divergences \citep{genevay2019sample}. We believe it is an interesting research direction to explore how our data-driven framework can be combined with existing algorithms for these distances. Nevertheless, note that in many safety-critical applications, a bound in Wasserstein distance is often preferred, e.g., to bound worst-case probabilities and expected values of functions \citep{mohajerin2018data}. Specifically, for these applications, which include, for instance, planning \citep{boskos2020data,gracia2025efficient}, financial portfolio optimization \citep{mohajerin2018data}, or out-of-sample prediction tasks \citep{kuhn2019wasserstein,shafieezadeh2015distributionally,chen2018robust}, our results can already have a direct and substantial impact.



\bibliographystyle{plainnat}
\begin{small}
\bibliography{main}
\end{small}

\newpage
\section{Appendix}\label{section:appendix}
\renewcommand{\thefigure}{A\arabic{figure}}
\setcounter{figure}{0}
\renewcommand{\thetable}{A\arabic{table}}
\setcounter{table}{0}

In this appendix, Section \ref{appendix:compare-bounds-existing-literature} reviews the computation of the state-of-the-art bound from the literature, referred to in the paper as the \emph{Fournier bound}. Section \ref{appendix:partition-analysis} then provides a detailed analysis of the empirical performance of Algorithm \ref{alg:approximate-construction}. In Section \ref{subsection-experiment:finding-optimal-quantization}, we present an additional experiment illustrating how to empirically determine the approximately optimal support size $M$ for $\ProbHat$. Section \ref{subsection:experimental-details} provides experimental details, while Section \ref{subsec:appendix-dim-scaling} contains a tabulated summary of results across multiple settings. Lastly, Section \ref{section:proofs} provides the proofs of all mathematical results in the paper.

\subsection{Computing the State-of-the-art bound from the literature}\label{appendix:compare-bounds-existing-literature}
We recall the computation of the state-of-the-art finite-sample bound commonly used in applications (e.g., \citet{boskos2023high,gracia2024data}), which we refer to as the \emph{Fourier bound} in the main text. Throughout this section, let $\ProbHat$ denote the empirical distribution constructed from $N$ i.i.d. samples drawn from a distribution $\Prob \in \sP(\sX)$, where $\sX\subset \realNum^d$ is compact. From Proposition A.2 in \citet{boissard2014mean}, for any $\tau\geq0$, it holds that
\begin{align}\label{eq:concentration-ineq-boissard}
    \Prob^N\big( \wasserstein_\rho(\Prob, \ProbHat) \geq \expect[\sT_\rho(\Prob, \ProbHat)]^\frac{1}{\rho} + \tau \big) \leq e^{-N\tau^{2\rho}/(2\norm{\sX}^{2\rho})},
\end{align}
where we also use that $\expect[\wasserstein_\rho(\Prob, \ProbHat)] \leq \expect[\sT_\rho(\Prob, \ProbHat)]^\frac{1}{\rho}$ by Jensen's inequality. Practitioners then employ the moment bound for $\expect[\sT_\rho(\Prob, \ProbHat)]^\frac{1}{\rho}$ introduced in \citet{fournier2015rate} for which numerical values are tabulated in Section 2.5 of \citet{fournier2023convergence}. Specifically, Tables 1 and 2 in \citet{fournier2023convergence} provide bounds for both $\expect[\sT_1(\Prob, \ProbHat)]$ and $\expect[\sT_2(\Prob, \ProbHat)]^\frac{1}{2}$ when the support $\sX = \sB_\infty(0, 1/2)$ or $\sX = \sB_2(0, 1/2)$ (as $\sB_2(0, 1/2) \subset \sB_\infty(0, 1/2)$)\footnote{In fact, Tables 1 and 2 present bounds with respect to the $L_\infty$-norm. The corresponding $L_2$-norm bounds used in our work are obtained by multiplying by a factor $\sqrt{d}$. Moreover, numerical values for $\rho\in \{1, 2\}$ are sufficient for our purposes, as these are the Wasserstein orders typically used in applications.}. For the case $\sX = \sB_2(0, 1/2)$, Tables 3 and 4 of \citet{fournier2023convergence} further refine these bounds for several representative dimensions $d$. Let $\sE_\rho\geq 0$ denote the tabulated upper bound satisfying $\expect[\sT_\rho(\Prob, \ProbHat)]^\frac{1}{\rho}\leq \sE_\rho$. Given a confidence level $1-\beta>0$, if we select $\tau = \norm{\sX} \left( \frac{2 \log\beta^{-1}}{N} \right)^\frac{1}{2\rho}$, then from \eqref{eq:concentration-ineq-boissard}, we have that
\begin{align}
    \Prob^N\Big( \wasserstein_\rho(\Prob, \ProbHat) \leq \sE_\rho + \norm{\sX}\left( \frac{2 \log\beta^{-1}}{N} \right)^\frac{1}{2\rho} \Big) \geq 1-\beta.
\end{align}

\begin{figure}[htbp]
    \centering
    \begin{subfigure}[t]{0.32\textwidth}
        \centering
        \includegraphics[width=\linewidth]{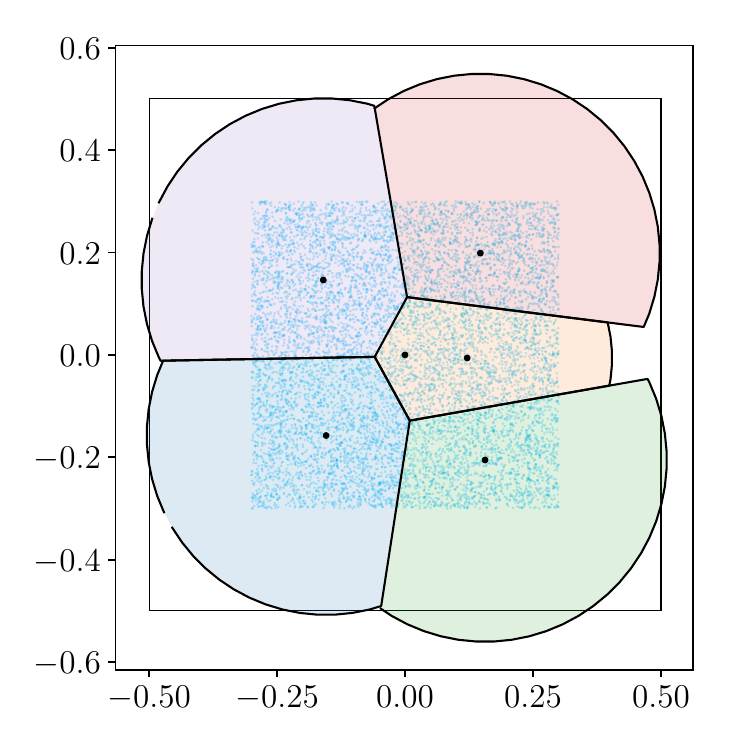}
        \caption{$M=5$}
    \end{subfigure}
    \begin{subfigure}[t]{0.32\textwidth}
        \centering
        \includegraphics[width=\linewidth]{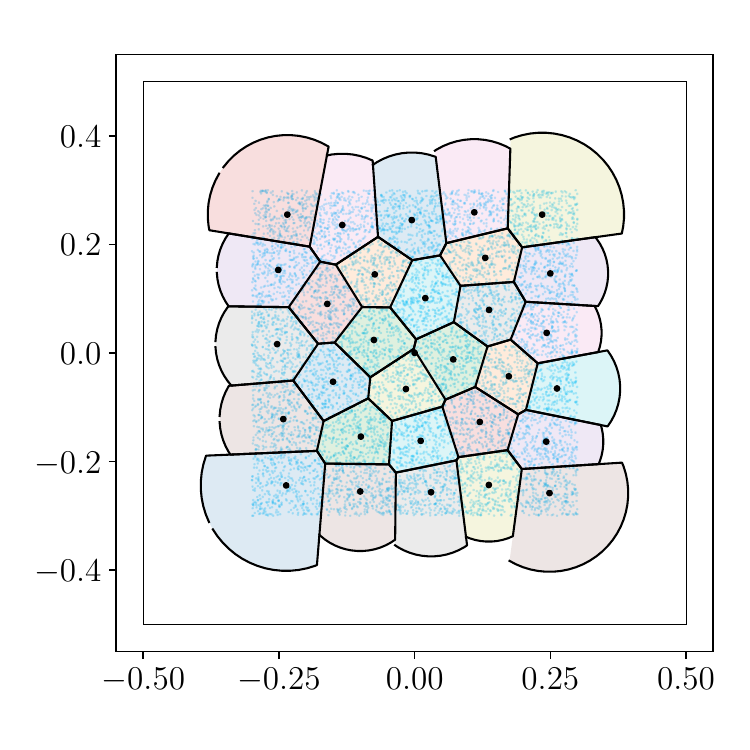}
        \caption{$M=30$}
    \end{subfigure}
    \begin{subfigure}[t]{0.32\textwidth}
        \centering
        \includegraphics[width=\linewidth]{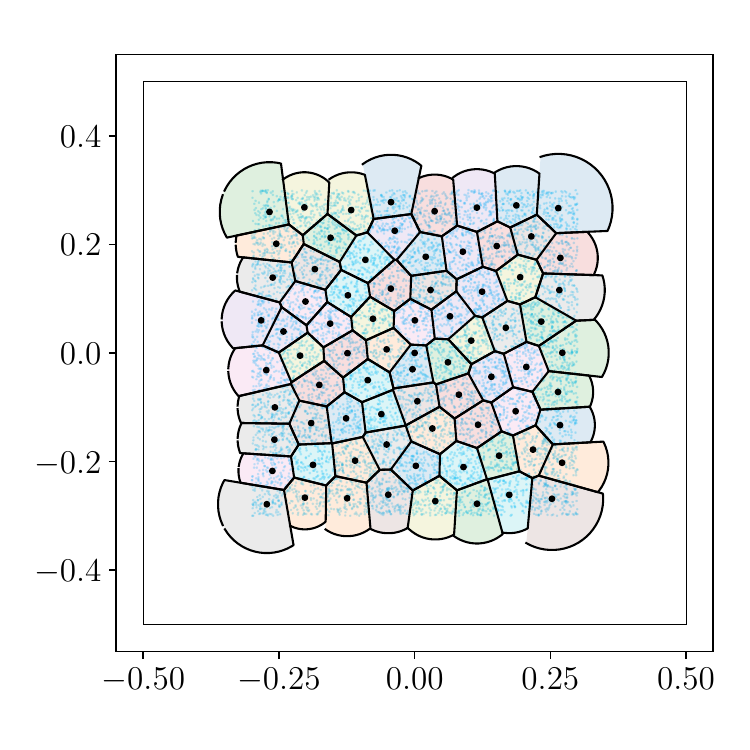}
        \caption{$M=75$}
    \end{subfigure}
    \caption{
    Illustration of the resulting representation points $\{\vc_i\}_{i=1}^M$ (black dots) and partition $\{C_i\}_{i=1}^M$ for different values of $M$ using $N_{\mathrm{train}}=5\times 10^3$ samples from an unknown 2D Uniform distribution with true support $[0.3, 0.3]^2$ and assumed support $\sX=[0.5, 0.5]^2$ (black box). The independent dataset $\sD_N$ with $N_{\mathrm{train}}=10^4$ (blue dots) is shown for validation of the resulting partition.
    }
    \label{fig:example_quantizations_uniform}
\end{figure}

\subsection{Analyzing the empirical performance of Algorithm~\ref{alg:approximate-construction}}\label{appendix:partition-analysis}
The discrete approximation $\ProbHat$ of $\Prob$ is constructed according to Algorithm~\ref{alg:approximate-construction} using two independent sample sets $\sD_{N_{\mathrm{train}}}$ and $\sD_{N}$, with the objective of minimizing $\wasserstein_\rho(\Prob,\ProbHat)$. 
In this section, we empirically evaluate the effectiveness of this construction and provide quantitative evidence supporting the heuristic choices introduced in Section~\ref{section:partition}.
The analysis is carried out on a 2D Gaussian mixture and a 2D Uniform distribution, with the corresponding partitions constructed by Algorithm~\ref{alg:approximate-construction} for varying $M$ shown in Figures~\ref{fig:example_quantizations} and~\ref{fig:example_quantizations_uniform}, respectively.

\subsubsection{Approximating the true support of the unknown distribution $\Prob$}
As discussed in Section~\ref{section:partition}, minimizing $\wasserstein_\rho(\Prob,\ProbHat)$ corresponds to choosing regions $\{\sC_1,\ldots,\sC_{M-1}\}$ that form a (Voronoi) partition of the true support of $\Prob$.
Since the true support is unknown, Algorithm~\ref{alg:approximate-construction} approximates it as the union of $L_m$-balls centered at the representative points $\{\vc_1,\ldots,\vc_{M-1}\}$, with associated radii $\{r_1,\ldots,r_{M-1}\}$, denoted by $\sX_{\mathrm{approx}}$.
The radii are constructed according to the heuristic described in lines~3-5 of Algorithm~\ref{alg:approximate-construction}, where the hyperparameter value $0.05$ used in line~4 is selected via a grid search over a range of settings and fixed a priori for all experiments reported in this work.
To assess whether this approximation adequately covers the true support, we measure the fraction of $N$ samples drawn independently of those used to construct $\sX_{\mathrm{approx}}$ that are contained in the uncovered region $\sX\setminus\sX_{\mathrm{approx}}$.
Figures~\ref{fig:partition-analysis-counts} show that, for both the Gaussian mixture and Uniform settings, no samples are observed in the uncovered region across all considered partition sizes $M$ and number of samples $N$.
This observation indicates that the radii $\{r_1,\ldots,r_{M-1}\}$ are chosen sufficiently large to cover the true support of $\Prob$ with high probability. 

While coverage of the true support could trivially be guaranteed by setting each $r_i$ equal to the diameter of the assumed support $\sX$, yielding $\sX_{\mathrm{approx}} = \sX$, this choice would be overly conservative. Our objective instead is to learn an approximation of the true support of $\Prob$, which may be strictly contained in $\sX$, as illustrated by the 2D Uniform setting in Figure~\ref{fig:example_quantizations_uniform}. A conservative approximation of the support leads to looser upper bounds on $\wasserstein_\rho(\Prob,\ProbHat)$ in Theorem~\ref{thm:main-theorem}, since these bounds depend on terms of the form $\max_{\vx \in \sC_i}\|\vx - \vc_j\|$.
From Figure~\ref{fig:example_quantizations_uniform}, we observe that in the uniform setting $\sX_{\mathrm{approx}}$ tightly encloses the true support of $\Prob$, with the approximation becoming increasingly tight as $M$ grows. For the Gaussian mixture setting shown in Figure~\ref{fig:example_quantizations}, where the true support coincides with the assumed support $\sX$, $\sX_{\mathrm{approx}}$ instead concentrates on regions of high probability mass. Since any uncovered region is explicitly accounted for through the additional region $\sC_M$, this behavior does not affect correctness and is in fact desirable, as it avoids unnecessarily enlarging the regions $\sC_i$ in areas with negligible probability mass. Figures~\ref{fig:partition-analysis-radii} quantify these observations by showing that the average radius of the regions $\{\sC_1,\ldots,\sC_{M-1}\}$ decreases as the support size $M$ increases.

\subsubsection{Radii of the partition regions}
Recall that the regions $\{\sC_1,\ldots,\sC_{M-1}\}$ are defined as the Voronoi partition induced by the representative points $\{\vc_1,\ldots,\vc_{M-1}\}$ on $\sX_{\mathrm{approx}}$.
Equivalently, these regions can be interpreted as Voronoi cells on $\sX$, each truncated by an $L_m$-ball of radius $r_i$ centered at $\vc_i$.
By construction, if all neighboring representative points are closer to $\vc_i$ than $r_i$, the truncation is inactive, and the radius of the smallest $\ell_p$-ball enclosing $\sC_i$, given by $\max_{\vx\in\sC_i}\|\vx-\vc_i\|$, can be strictly smaller than $r_i$.
Interestingly, Figures~\ref{fig:partition-analysis-radii} show that, in practice, the values $r_i$ closely approximate $\max_{\vx\in\sC_i}\|\vx-\vc_i\|$ across regions.
We exploit this observation to efficiently bound the cost terms $\max_{\vx\in\sC_i}\|\vx-\vc_j\|$ appearing in Theorem~\ref{thm:main-theorem}, whose exact computation becomes prohibitive for large $M$ and high dimensions. In particular, by a single application of the triangle inequality,
\[
    \max_{\vx\in\sC_i}\|\vx-\vc_j\| \leq \max_{\vx\in\sC_i}\|\vx-\vc_i\| + \|\vc_i-\vc_j \| \leq r_i + \|\vc_i-\vc_j \|.
\]
As suggested by Figures~\ref{fig:partition-analysis-radii}, the second inequality is typically tight. 
Moreover, the pairwise distances $\|\vc_i-\vc_j\|$ are obtained directly as a byproduct of the $(M-1)$-means clustering step in Algorithm~\ref{alg:approximate-construction}, resulting in a computationally efficient and empirically accurate bound. 
Figures~\ref{fig:partition-analysis-l2-distance} further show that, as $M$ increases, the average pairwise distances $\|\vc_i-\vc_j\|$ remain relatively stable, so that our bounds on the terms $\max_{\vx\in\sC_i}\|\vx-\vc_j\|$ diminish due to decreasing radii $r_i$.

\subsubsection{Concentration of regional probability mass intervals}
Having constructed the regions $\sC_1,\hdots,\sC_M$, Algorithm~\ref{alg:approximate-construction} computes the probability vector $\vpi$ defining $\ProbHat$ in lines~6-7 by counting the number of samples in the dataset $\sD_N$ that fall in each region $\sC_i$. 
Figure~\ref{fig:partition-analysis-counts} accordingly reports the average probability per region for the settings considered.
Based on this probability vector, Theorem~\ref{thm:main-theorem} constructs confidence intervals $[\evlowerb{i}, \evupperb{i}]$ for the probability mass of each region according to \eqref{eq:clopper-pearson-lower-bound} and \eqref{eq:clopper-pearson-upper-bound}. 
Figures~\ref{fig:partition-analysis-probs} show that, for a fixed partition, increasing the number of samples leads to a systematic reduction in the width of these probability mass intervals. This behavior confirms that the uncertainty in the regional probability estimates decreases with additional data, directly translating into tighter Wasserstein bounds through Theorem~\ref{thm:main-theorem}.

\begin{figure}
    \centering
    \begin{subfigure}[t]{\textwidth}
        \centering
        \begin{minipage}{0.48\textwidth}
            \centering
            \includegraphics[width=0.8\linewidth]{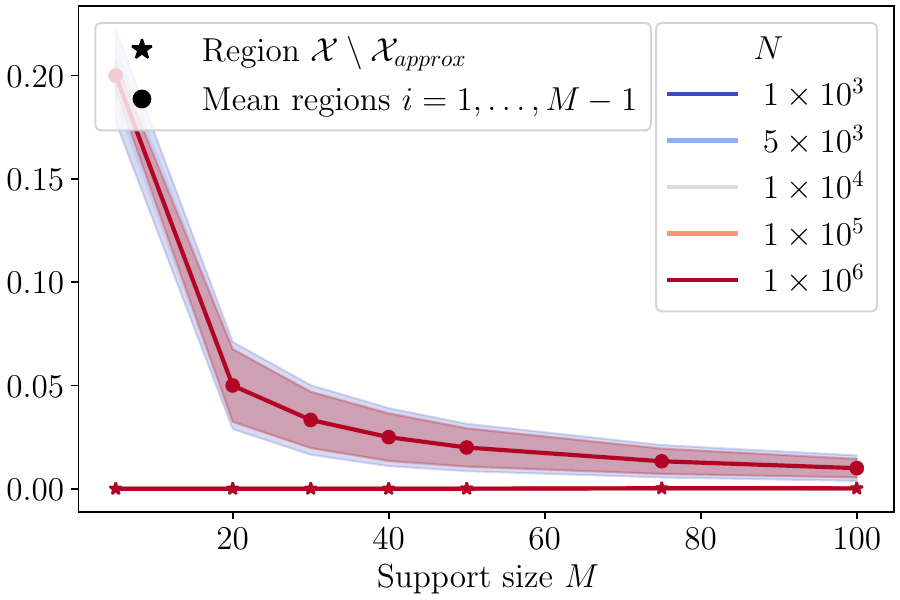} 
        \end{minipage}
        \hfill
        \begin{minipage}{0.48\textwidth}
            \centering
            \includegraphics[width=0.8\linewidth]{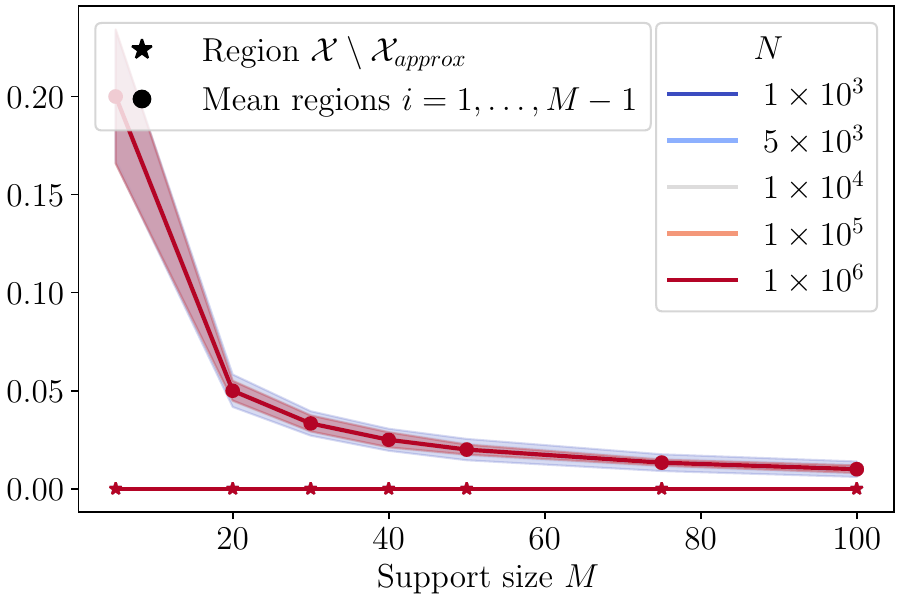}
        \end{minipage}
        \caption{Fraction of $N$ samples falling in the region $\sX\setminus\sX_{\mathrm{approx}}$ and average over the remaining regions.}
        \label{fig:partition-analysis-counts}
    \end{subfigure}
    \vspace{1em}
    \begin{subfigure}[t]{\textwidth}
        \centering
        \begin{minipage}{0.48\textwidth}
            \centering
            \includegraphics[width=0.8\linewidth]{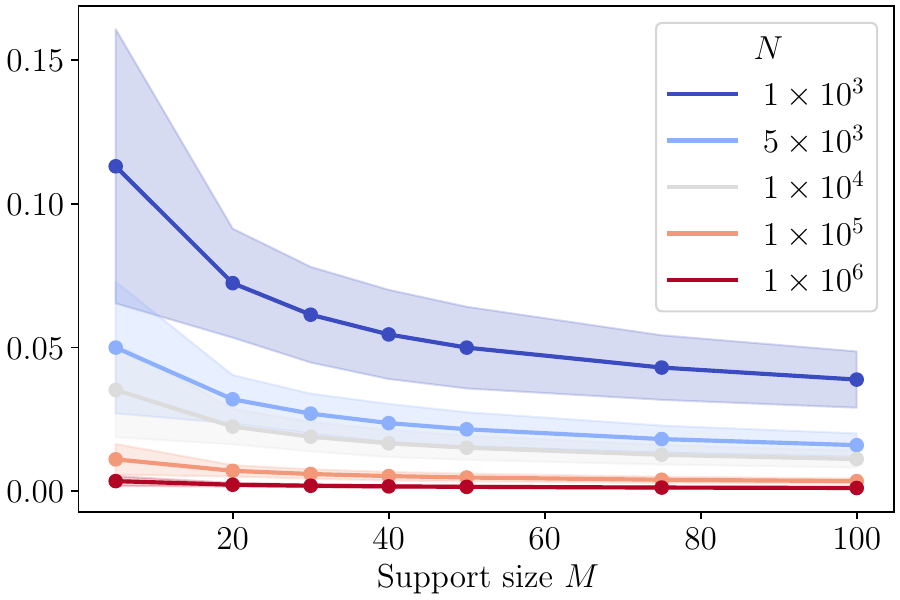}
        \end{minipage}
        \hfill
        \begin{minipage}{0.48\textwidth}
            \centering
            \includegraphics[width=0.8\linewidth]{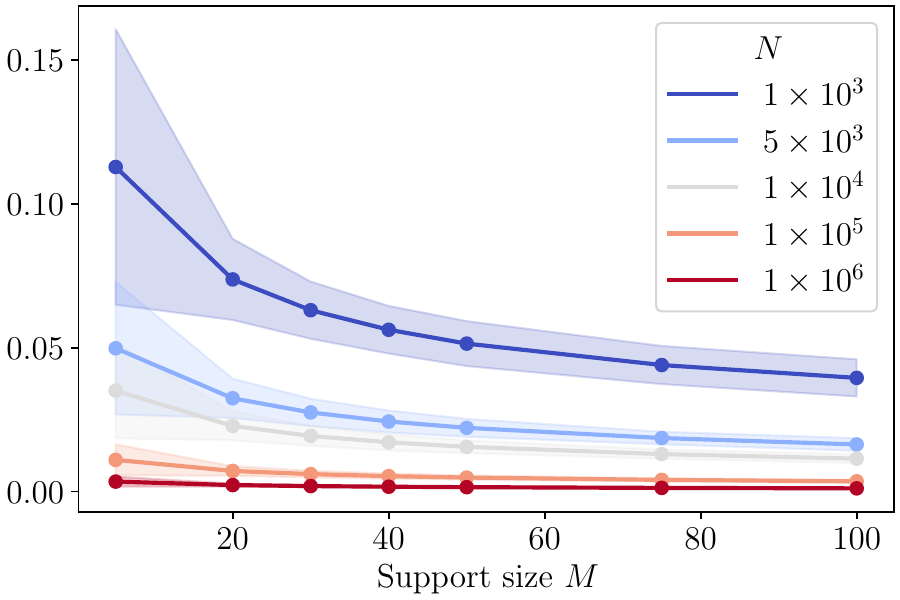}    
        \end{minipage}
        \caption{Width of the probability mass interval $[\evlowerb{i}, \evupperb{i}]$. }
        \label{fig:partition-analysis-probs}
    \end{subfigure}
    \vspace{0.5em}
    \begin{subfigure}[t]{\textwidth}
        \centering
        \begin{minipage}{0.48\textwidth}
            \centering
            \includegraphics[width=0.8\linewidth]{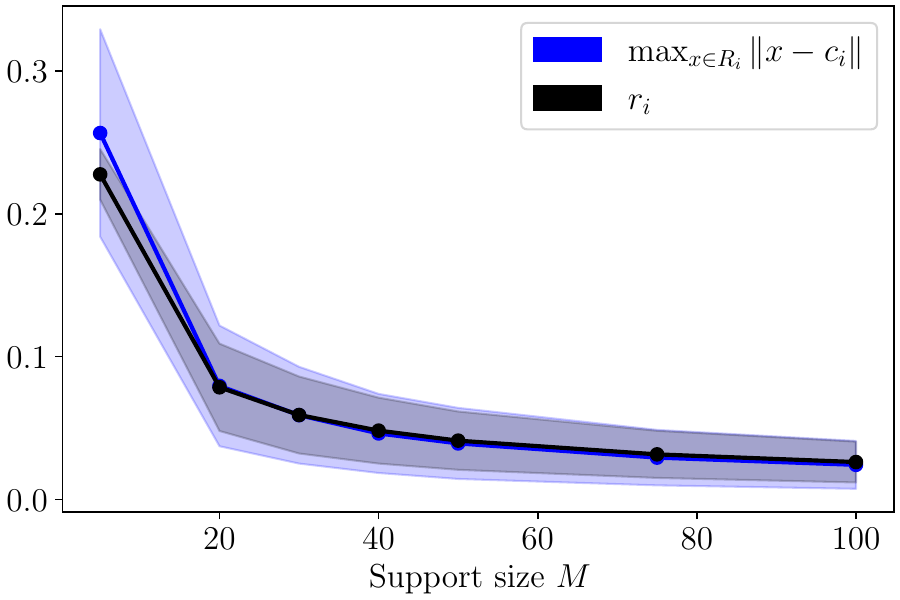}    
        \end{minipage}
        \hfill
        \begin{minipage}{0.48\textwidth}
            \centering
            \includegraphics[width=0.8\linewidth]{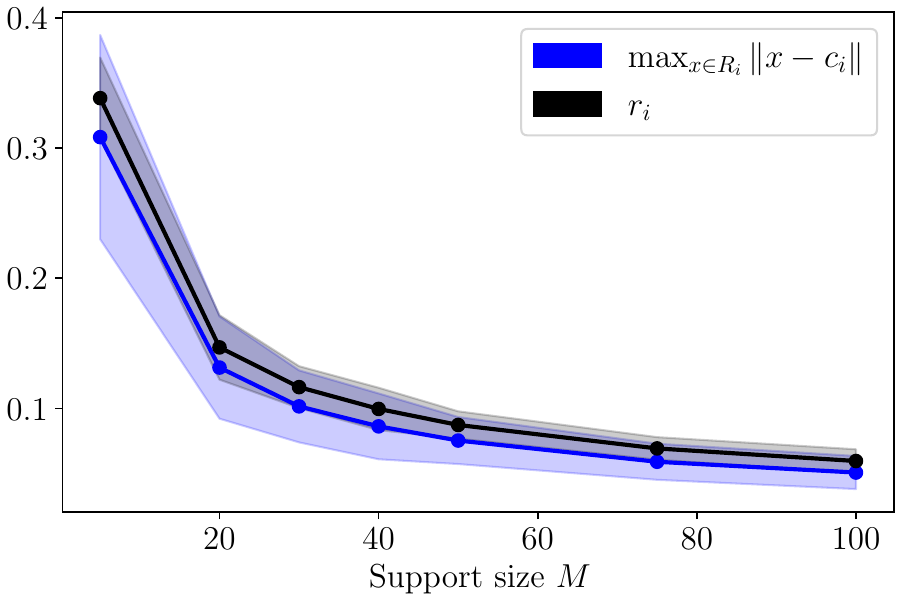}    
        \end{minipage}
        \caption{Heuristic radii $r_i$ compared with exact region radii $\max_{\vx\in\sC_i}\|\vx-\vc_i\|$}
        \label{fig:partition-analysis-radii}
    \end{subfigure}
    \vspace{0.5em}
    \begin{subfigure}[t]{\textwidth}
        \centering
        \begin{minipage}{0.48\textwidth}
            \centering
            \includegraphics[width=0.8\linewidth]{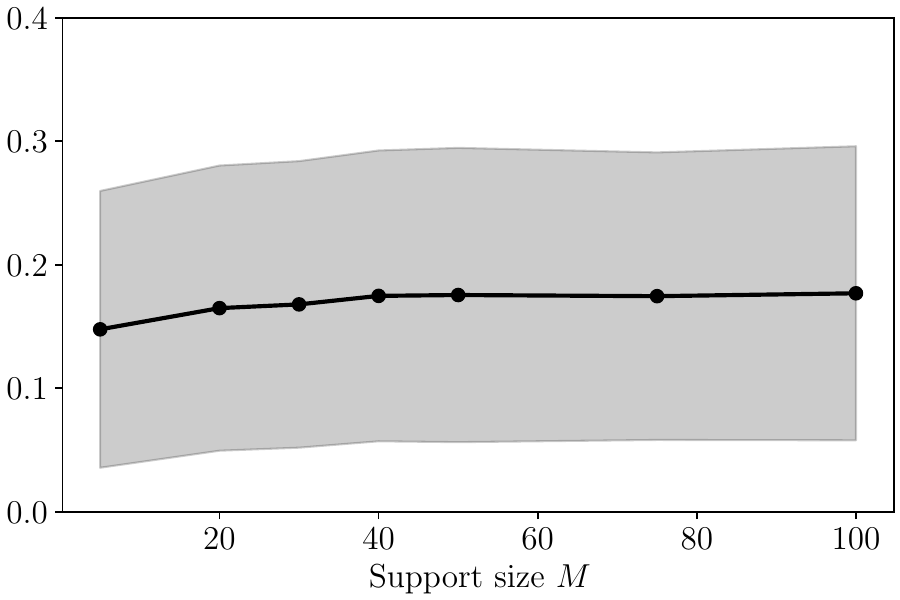}
        \end{minipage}
        \hfill
        \begin{minipage}{0.48\textwidth}
            \centering
            \includegraphics[width=0.8\linewidth]{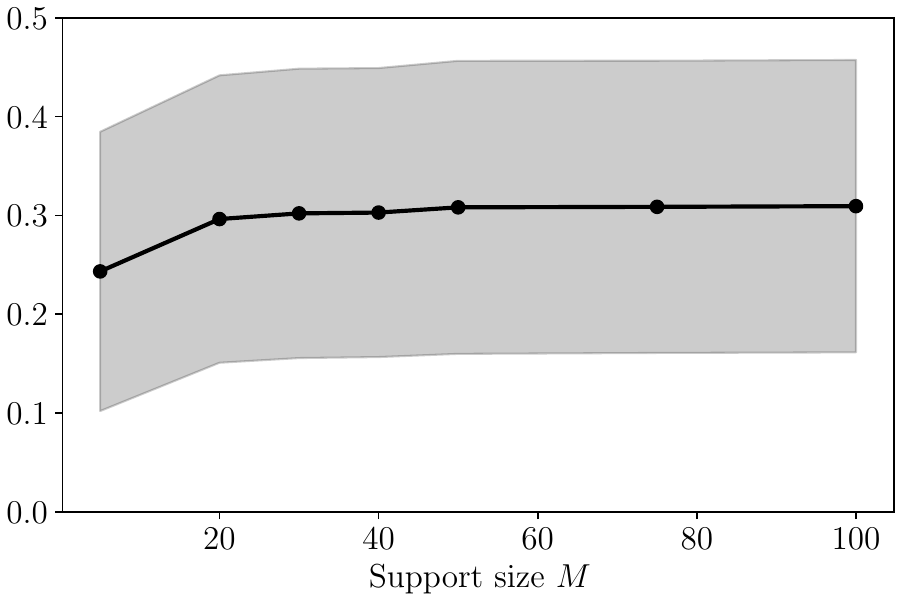}
        \end{minipage}    
        \caption{Pairwise distance $\|c_i-c_j\|$ ($i\neq j$)}
        \label{fig:partition-analysis-l2-distance}
    \end{subfigure}
    \caption{Empirical analysis of Algorithm~\ref{alg:approximate-construction} for the 2D Gaussian mixture (left column) and 2D Uniform (right column) settings. Solid lines denote mean values, and shaded areas indicate one standard deviation. }
    \label{fig:partition-analysis}
\end{figure}

\subsection{On the Choice of the Support Size}\label{subsection-experiment:finding-optimal-quantization}
We provide an additional analysis on the effect of the quantization support size $M$ on the bounds in Theorem \ref{thm:main-theorem} and Proposition \ref{prop:milp-into-lp}. In particular, we show that we can empirically select the optimal quantization $\ProbHat$ by observing the inflection point of the bound w.r.t. $M$, as seen in Figure \ref{fig:inflection-point-m}. For a fixed number of samples $N$, increasing the support size has a dual effect: while a higher $M$ contributes for the reduction of the quantization error, it also reduces the confidence $\frac{\beta}{M}$ associated to each region $C_i$ (see \eqref{eq:clopper-pearson-lower-bound} and \eqref{eq:clopper-pearson-upper-bound}), increasing the length of the intervals $[\evlowerb{i}, \evupperb{i}]$. While the first effect dominates for low $M$ (due to the multiscale behavior discussed in the introduction of this work), the latter becomes more relevant as $M$ grows, thus creating the inflection that allows us to empirically select the optimal support size $M$.
\begin{figure}[ht]
    \centering
    \begin{subfigure}[t]{0.45\textwidth}
        \centering
        \includegraphics[width=\linewidth]{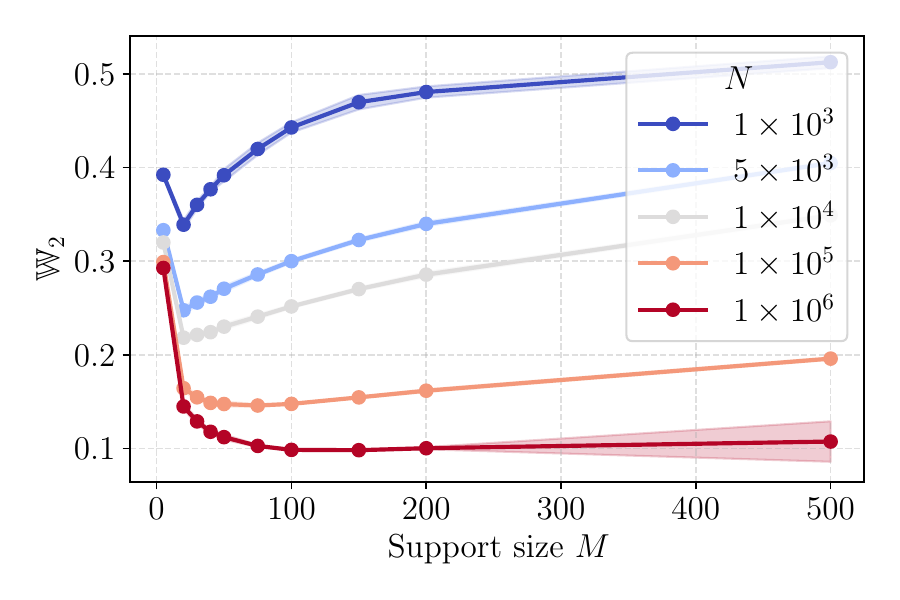}
        \caption{Using Theorem \ref{thm:main-theorem}}
    \end{subfigure}
    \begin{subfigure}[t]{0.45\textwidth}
        \centering
        \includegraphics[width=\linewidth]{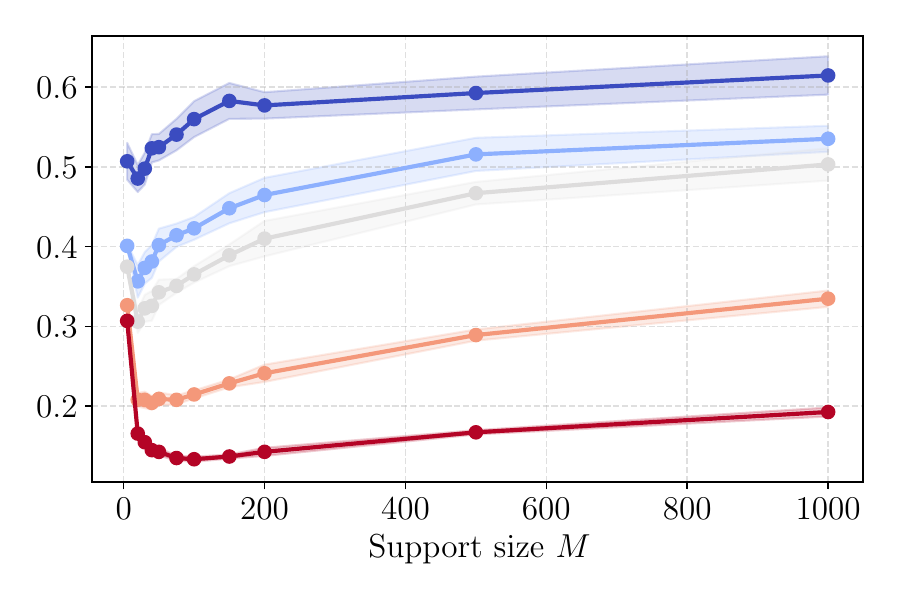}
        \caption{Using Proposition \ref{prop:milp-into-lp}}
    \end{subfigure}
    \caption{Our bound (for $\rho=2$) as the support size $M$ increases for multiple number of samples $N$ in a 3D Gaussian Mixture setting.}
    \label{fig:inflection-point-m}
\end{figure}

\subsection{Experimental Details}\label{subsection:experimental-details}
In the following, we provide details on the distributions considered in each experiment in Section \ref{section:experiments}. In Figure \ref{fig:computation-time}, we consider Gaussian distributions described by $\Prob = \sN(\bm{0}, 3 \times 10^{-2} I_{2\times 2})$ and $\Prob = \sN(\bm{0}, 2.2 \times 10^{-2} I_{100\times 100})$. In Figure \ref{fig:variance-analysis}, we also consider Gaussian distributions given by $\Prob = \sN(\bm{0}, \sigma^2I_{d\times d})$, where the variance $\sigma^2$ is indicated in the legend. In Figure \ref{fig:comparison-fournier}, we consider Uniform distributions with support conserving the same volume ratio w.r.t. to $\sX$. For dimension $d=2$, we set the support of the Uniform distribution to be a $L_\infty$-ball of diameter $0.2$, leading to a volume ratio of $\frac{0.2^2}{1^2}=0.04$, which is then used as a reference for the remaining dimensions. For instance, for dimension $d=10$, the diameter $\phi$ is chosen so that $\frac{\phi^{10}}{1^{10}}=0.04$, thus $\phi \approx 0.72$. An analogous reasoning is applied for the Gaussian distributions. For $d=2$, we set the covariance to $3\times 10^{-2} I_{2\times 2}$, for which $\sX$ contains a probability mass of approximately $0.992$. This is the quantity that we choose to conserve for the remaining dimensions. For instance, for $d=10$, a variance of $\sigma^2 \approx 0.0221$ is required for the probability criterion to be met\footnote{For $Z = (Z_1,\dots,Z_d)\sim \sN(\bm{0},\sigma^2I_{d\times d})$, $\Prob_Z(\sX)=\prod_{i=1}^d \Prob_{Z_i}([-0.5,0.5])$ as independence can be concluded from the nonexistence of correlation in this case. Because $\Prob_{Z_i}([-0.5,0.5])=2\Phi\left( \frac{1}{2\sigma} \right)-1$, it holds that $\Prob_Z(\sX) = \Big( 2\Phi\left( \frac{1}{2\sigma} \right)-1 \Big)^d$, where $\Phi$ is the CDF of the standard Gaussian, which allows us to compute the variance $\sigma^2$ required for any given desired value for $\Prob_Z(\sX)$.}.

\subsection{Tabulated Results for Dimensional Scaling Experiments}\label{subsec:appendix-dim-scaling}
\setlength{\tabcolsep}{3pt} 
\begin{table*}[h]
\centering
\footnotesize 
\begin{subtable}{0.48\textwidth}
    \centering   \centering
\vspace{0.3cm}
\begin{tabular}{c c c c c c}
\hline
$d$ & $N$ & $M_{\text{opt}}$ & Thm.~\ref{thm:main-theorem} & Prop.~\ref{prop:milp-into-lp} & Fournier \\
\hline
$2$ & $10^{3}$ & $20$ & $0.122$ & $0.174$ & $0.505$ \\
 & $10^{4}$ & $50$ & $0.061$ & $0.090$ & $0.184$ \\
 & $10^{5}$ & $100$ & $0.034$ & $0.049$ & $0.066$ \\
 & $10^{6}$ & $100$ & $0.025$ & $0.031$ & $0.023$ \\
\hline
$10$ & $10^{3}$ & $20$ & $1.320$ & $1.438$ & $3.394$ \\
 & $10^{4}$ & $75$ & $1.147$ & $1.338$ & $2.445$ \\
 & $10^{5}$ & $200$ & $0.895$ & $1.056$ & $1.863$ \\
 & $10^{6}$ & $200$ & $0.796$ & $0.757$ & $1.454$ \\
\hline
$50$ & $10^{3}$ & $5$ & $3.375$ & $3.539$ & $13.185$ \\
 & $10^{4}$ & $30$ & $3.318$ & $3.466$ & $11.841$ \\
 & $10^{5}$ & $200$ & $3.077$ & $3.286$ & $11.070$ \\
 & $10^{6}$ & $1000$ & $2.752$ & $2.926$ & $10.497$ \\
\hline
$100$ & $10^{3}$ & $5$ & $4.801$ & $4.991$ & $20.141$ \\
 & $10^{4}$ & $40$ & $4.741$ & $4.840$ & $18.583$ \\
 & $10^{5}$ & $100$ & $4.519$ & $4.699$ & $17.813$ \\
 & $10^{6}$ & $500$ & $4.319$ & $4.420$ & $17.298$ \\
\hline
\end{tabular}
\label{table:comparison-fournier-uniform}
    \caption{Uniform ($\rho=1$)}
\end{subtable}
\hfill
\begin{subtable}{0.48\textwidth}
    \centering
    \centering
\vspace{0.3cm}
\begin{tabular}{c c c c c c}
\hline
$d$ & $N$ & $M_{\text{opt}}$ & Thm.~\ref{thm:main-theorem} & Prop.~\ref{prop:milp-into-lp} & Fournier \\
\hline
$2$ & $10^{3}$ & $20$ & $0.518$ & $0.693$ & $0.934$ \\
 & $10^{4}$ & $40$ & $0.334$ & $0.464$ & $0.525$ \\
 & $10^{5}$ & $100$ & $0.201$ & $0.291$ & $0.295$ \\
 & $10^{6}$ & $100$ & $0.131$ & $0.182$ & $0.166$ \\
\hline
$10$ & $10^{3}$ & $20$ & $1.133$ & $1.292$ & $4.221$ \\
 & $10^{4}$ & $75$ & $0.983$ & $1.193$ & $3.054$ \\
 & $10^{5}$ & $500$ & $0.751$ & $0.967$ & $2.258$ \\
 & $10^{6}$ & $200$ & $0.683$ & $0.728$ & $1.699$ \\
\hline
$50$ & $10^{3}$ & $5$ & $1.842$ & $2.000$ & $14.892$ \\
 & $10^{4}$ & $40$ & $1.777$ & $1.946$ & $13.090$ \\
 & $10^{5}$ & $200$ & $1.618$ & $1.816$ & $11.864$ \\
 & $10^{6}$ & $500$ & $1.486$ & $1.630$ & $10.972$ \\
\hline
$100$ & $10^{3}$ & $5$ & $2.422$ & $2.579$ & $22.556$ \\
 & $10^{4}$ & $50$ & $2.294$ & $2.408$ & $20.351$ \\
 & $10^{5}$ & $200$ & $2.162$ & $2.334$ & $18.936$ \\
 & $10^{6}$ & $500$ & $2.052$ & $2.210$ & $17.970$ \\
\hline
\end{tabular}
\label{table:comparison-fournier-gaussian}
    \caption{Gaussian ($\rho=2$)}
\end{subtable}
\caption{Summary of results for various settings. The quantity $M_{\text{opt}}$ is the optimal support size for $\ProbHat$ obtained empirically for each given $N$ (as explained in Appendix \ref{subsection-experiment:finding-optimal-quantization}).
}
\label{table:uniform_gaussian_over_dims}
\end{table*}
\clearpage
\subsection{Proofs}\label{section:proofs}
We provide the proofs of the main results together with supporting technical results.

\subsubsection{Proof of Theorem \ref{thm:main-theorem}}
The main ingredient of the proof is the following Proposition.
\begin{proposition}
\label{prop:wasserstein-bound-no-confidence}
    Assume that for any $C_i\in \{C_1,...,C_M\}$ it holds that $0\leq \evlowerb{i} \leq \Prob(C_i) \leq \evupperb{i} \leq 1.$ Then,
    $$ \wasserstein_{\rho}(\Prob,\ProbHat) \leq \epsilon(\samplesVector), $$
where $\epsilon(\samplesVector)$ is as defined in Theorem \ref{thm:main-theorem}.    
\end{proposition}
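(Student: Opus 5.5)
We will prove the bound by building, for the unknown $\Prob$, one explicit coupling with $\ProbHat$ whose cost is at most the value of the MILP in \eqref{eq:def-epsilon}. Let $\omega^\star \in \realNum^M$ with $\omega^{\star(i)} = \Prob(C_i)$. Because $\{C_1,\dots,C_M\}$ partitions $\sX$ and, by hypothesis, $\evlowerb{i}\le \Prob(C_i)\le\evupperb{i}$, we have $\omega^\star\in\constrainedSimplex{\lowerb}{\upperb}^M$. Hence $\omega^\star$ is a feasible choice of the outer variable in \eqref{eq:def-epsilon}. It then suffices to find some $\gamma^\star$ that is feasible for the constraints with $\omega=\omega^\star$ and satisfies
\[
\sT_\rho(\Prob,\ProbHat)\le \sum_{i,j}\max_{x\in C_i}\norm{x-c_j}^\rho\,\gamma^{\star(i,j)}.
\]
Since the right-hand side is at most $\epsilon(\samplesVector)^\rho$, taking $\rho$-th roots gives the claim.

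\textbf{Step 1 (discrete plan).} Choose any $\gamma^\star\in\Delta^{M\times M}$ with row sums $\omega^\star$, column sums $\pi$, and diagonal $\gamma^{\star(i,i)}=\min\{\omega^{\star(i)},\pi^{(i)}\}$. To construct it, first place the diagonal mass. The residual row masses $a_i=\omega^{\star(i)}-\min\{\omega^{\star(i)},\pi^{(i)}\}$ and residual column masses $b_j=\pi^{(j)}-\min\{\omega^{\star(j)},\pi^{(j)}\}$ have equal totals, since both $\omega^\star$ and $\pi$ sum to one. For each $i$ at most one of $a_i,b_i$ is positive. Filling the off-diagonal entries with any coupling of $a$ and $b$ (for instance the product $a_ib_j/\sum_k a_k$) therefore puts no extra mass on the diagonal, and all constraints of \eqref{eq:def-epsilon} hold.

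\textbf{Step 2 (lift to a continuous coupling).} Let $\Prob_i=\Prob(\cdot\cap C_i)/\Prob(C_i)$ be the conditional law on $C_i$, defined whenever $\Prob(C_i)>0$. Define
\[
\gamma=\sum_{i,j}\frac{\gamma^{\star(i,j)}}{\omega^{\star(i)}}\,\Prob(\cdot\cap C_i)\otimes\delta_{c_j}.
\]
Rows with $\omega^{\star(i)}=0$ contribute nothing: their entries vanish because the row sums are zero, which handles the division by zero. The first marginal is then $\sum_i \Prob(\cdot\cap C_i)=\Prob$, and the second is $\sum_j\pi^{(j)}\delta_{c_j}=\ProbHat$. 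So $\gamma\in\Gamma(\Prob,\ProbHat)$. Its cost is
\[
\sum_{i,j}\frac{\gamma^{\star(i,j)}}{\omega^{\star(i)}}\int_{C_i}\norm{x-c_j}^\rho\,d\Prob(x)\le\sum_{i,j}\gamma^{\star(i,j)}\sup_{x\in C_i}\norm{x-c_j}^\rho .
\]
Combined with Step 1, this yields $\sT_\rho(\Prob,\ProbHat)\le\epsilon(\samplesVector)^\rho$.

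\textbf{Main obstacle.} The only delicate points are technical. The first is the $\max$ over $C_i$: if $C_i$ is not closed it should be read as a supremum, which is finite because $\sX$ is bounded. The second is making sure Step 1's construction is valid, i.e.\ that the residual masses can always be matched off the diagonal. The proof does not need the supremum over $\gamma$ in \eqref{eq:def-epsilon} to be tight, because we only need one feasible $(\omega^\star,\gamma^\star)$. The relaxation lower bounds in the MILP (diagonal entries at least the minimum) are automatically met by our choice.
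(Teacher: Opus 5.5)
Your proof is correct, and it uses the same two ingredients as the paper. The first is the plan with diagonal $\min\{\omega^{(i)},\pi^{(i)}\}$ and product-form off-diagonal residuals, which is exactly the Villani-type construction in the paper's footnote. The second is the bound $\int_{C_i}\norm{x-c_j}^\rho\, d\Prob \le \Prob(C_i)\max_{x\in C_i}\norm{x-c_j}^\rho$.

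The route differs in how these are assembled:
\begin{itemize}
\item The paper first enlarges $\sT_\rho(\Prob,\ProbHat)$ to a supremum over all distributions whose cell masses lie in the Clopper--Pearson intervals. It then reduces this to the max-min LP \eqref{Eqn:DiscreteTransportInfimumDiscrete}, and finally relaxes the inner infimum to a supremum over diagonal-constrained plans.
\item You instead evaluate the MILP at the single feasible point $(\omega^\star,\gamma^\star)$ with $\omega^\star=(\Prob(C_i))_i$. The supremum over distributions and the inf-to-sup relaxation never appear.
\end{itemize}

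Your version has one concrete advantage: Step 2 explicitly lifts a discrete plan to a genuine coupling in $\Gamma(\Prob,\ProbHat)$. The paper uses this lift only implicitly when it identifies the infimum in \eqref{Eqn:DiscreteTransportInfimum} with the discrete one. Its stated reason, that couplings with equal cell masses $\gamma(C_i,c_j)$ give equal values, does not by itself show that every discrete plan is realized by some coupling, and that realizability is the direction the upper bound needs.

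The paper's route, in exchange, exhibits the tighter max-min certificate \eqref{Eqn:DiscreteTransportInfimumDiscrete} that its conclusion proposes as future work. Your argument yields that certificate too, if you replace $\gamma^\star$ by an optimal discrete plan for the marginals $(\omega^\star,\pi)$.

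One small omission in Step 1: handle the case $\sum_k a_k=0$ (i.e.\ $\omega^\star=\pi$) separately, since the product formula divides by zero there. In that case the diagonal plan alone is already a valid coupling.
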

\begin{proof}
The proof starts by noticing that because of the assumption $\evlowerb{i} \leq \Prob(C_i) \leq \evupperb{i}$, it holds that
\begin{align}
    \sT_{\rho}(\Prob,\ProbHat) &\leq \sup_{\Prob \in \sP(\sX)} \sT_{\rho}(\Prob,\ProbHat) \nonumber \\
    &\quad \text{ s.t. } \Prob(C_i) \in [\evlowerb{i}, \evupperb{i}] \text{ for all } i \in [M] \nonumber \\
    &\text{(By Definition \eqref{eq:power-wasserstein-definition})} \\ 
    &=\sup_{\Prob \in \sP(\sX)} \inf_{\gamma \in \Gamma(\Prob, \ProbHat)} \int_{\sX \times \sX} \norm{x-y}^{\rho} \gamma(dx, dy) \label{eq:auxiliary-proof-sup-problem} \\
    &\quad \text{ s.t. } \Prob(C_i) \in [\evlowerb{i}, \evupperb{i}] \text{ for all } i \in [M], \nonumber \\
    &\text{(As $\ProbHat$ has discrete support)}\\
    &=\sup_{\Prob \in \sP(\sX)} \inf_{\gamma \in \Gamma(\Prob, \ProbHat)} \sum_{j=1}^M\int_{ \sX} \norm{x-c_j}^{\rho} \gamma(dx, c_j) \nonumber\\
    &\quad \text{ s.t. } \Prob(C_i) \in [\evlowerb{i}, \evupperb{i}] \text{ for all } i \in [M], \nonumber \\
        &\text{(As $\{ C_i \}_{i=1}^M$ is a partition of $\sX$ )} \nonumber \\
    &=\sup_{\Prob \in \sP(\sX)} \inf_{\gamma \in \Gamma(\Prob, \ProbHat)} \sum_{i=1}^M \sum_{j=1}^M\int_{ C_i} \norm{x-c_j}^{\rho} \gamma(dx, c_j) \nonumber\\
    &\quad \text{ s.t. } \Prob(C_i) \in [\evlowerb{i}, \evupperb{i}] \text{ for all } i \in [M], \nonumber \\
    &\leq\sup_{\Prob \in \sP(\sX)} \inf_{\gamma \in \Gamma(\Prob, \ProbHat)} \sum_{i=1}^M \sum_{j=1}^M \max_{x\in C_i} \norm{x-c_j}^{\rho}  \gamma(C_i, c_j) \label{Eqn:DiscreteTransportInfimum}\\
    &\quad \text{ s.t. } \Prob(C_i) \in [\evlowerb{i}, \evupperb{i}] \text{ for all } i \in [M]. \nonumber
\end{align}
Now, we note that for any transport plans $\gamma_1,\gamma_2$ such that $\forall i,j \in [M]$ $\gamma_1(C_i,c_j)=\gamma_2(C_i,c_j)$, it holds that they lead to the same value in \eqref{Eqn:DiscreteTransportInfimum}. As a consequence, it also holds that any feasible distributions $\Prob_1, \Prob_2$ such that $\Prob_1(C_i)=\Prob_2(C_i)$ $\forall i \in [M]$ also lead to the same value in \eqref{Eqn:DiscreteTransportInfimum}. Thus, we have that \eqref{Eqn:DiscreteTransportInfimum} can be rewritten as the following supremum of a discrete optimal transport problem, which is a max-min linear program:
\begin{align}\label{Eqn:DiscreteTransportInfimumDiscrete}
     &=\sup_{\omega \in \Delta^M} \inf_{\gamma \in \Delta^{M\times M}} \sum_{i=1}^M \sum_{j=1}^M \max_{x\in C_i} \norm{x-c_j}^{\rho}  \evgamma{i,j} \\
    &\qquad \text{s.t. }\;
\left\{
\begin{array}{l}
\sum_{j=1}^M\evgamma{i,j}=\evomega{i}  \text{ for all } i \in [M] \nonumber \\
\sum_{i=1}^M\evgamma{i,j}=\evpi{j} \text{ for all } j \in [M] \nonumber \\
\evomega{i} \in [\evlowerb{i}, \evupperb{i}] \text{ for all } i \in [M] \nonumber
\end{array}
\right.
\end{align}
where $\evomega{i}$ replaces $\Prob(C_i)$ and we recall that by definition $\ProbHat(c_j)=\evpi{j}$. Therefore, any $\gamma$ that satisfies the above constraints leads to a valid upper bound for \eqref{Eqn:DiscreteTransportInfimumDiscrete}. Now, we observe that among the set of admissible $\gamma$, there always be at least one such that $\forall i\in [M]$, $\evgamma{i,i}\geq\min \big\{ \evomega{i},\evpi{i} \big\}$\footnote{Consider, for instance, the transport plan adapted from Theorem 6.15 in \citet{villani2008optimal} given by $\evgamma{i,j}=\min\{ \evomega{i}, \evpi{j} \}\indicator_{i=j} + \frac{1}{a} (\evomega{i}-\evpi{i})_{+} (\evomega{j}-\evpi{j})_{-}$, where $(x)_{+} = \max\{ x, 0 \}$, $(x)_{-} = \max\{ -x, 0 \}$, and $a = \sum_{i=1}^M (\evomega{i}-\evpi{i})_{+}=\sum_{i=1}^M (\evomega{i}-\evpi{i})_{-}$. We note that $\gamma \in \Gamma(\omega, \pi)$ as $\gamma \geq 0$, $\sum_{j=1}^M \evgamma{i,j}=\evomega{i} - (\evomega{i}-\evpi{i})_{+} + \sum_{j=1}^M \frac{1}{a} (\evomega{i}-\evpi{i})_{+} (\evomega{j}-\evpi{j})_{-} =  \evomega{i}$ (where we use the fact that $\min\{ \evomega{i}, \evpi{i} \}=\evomega{i} - (\evomega{i}-\evpi{i})_{+} \})$, and, analogously, $\sum_{i=1}^M \evgamma{i,j}=\evpi{j}$. For this transport plan, it holds that $\evgamma{i,i}\geq \min\{ \evomega{i}, \evpi{i} \}$.}. Consequently, it holds that
\begin{align}\label{eq:proof-inf-to-sup-conservative}  \eqref{Eqn:DiscreteTransportInfimumDiscrete}&\leq\sup_{\substack{
\omega \in \Delta^M, \\
\gamma \in \Delta^{M \times M}
}} \sum_{i=1}^M \sum_{j=1}^M \max_{x \in C_i} \norm{x-c_j}^{\rho} \evgamma{i,j} \\
    &\qquad \text{s.t. }\;
\left\{
\begin{array}{l}
\gamma \bar\indicator = \omega \nonumber \\
\gamma^\top \bar\indicator = \pi \nonumber \\
\evgamma{i,i} \geq \min \bigl\{ \evomega{i}, \evpi{i} \bigr\},
\quad \forall i \in [M] \nonumber \\
\evomega{i} \in [\evlowerb{i}, \evupperb{i}] \quad \forall i \in [M].\nonumber
\end{array}
\right.
\end{align}
We have thus showed that $\sT_{\rho}(\Prob,\ProbHat) \leq \epsilon(\sD_N)^\rho$. From the Wasserstein definition in \eqref{eq:wasserstein-distance-definition}, we conclude that $\wasserstein_\rho(\Prob,\ProbHat) \leq \epsilon(\sD_N)$.
\end{proof}

We are now ready to prove the main Theorem. 
\begin{proof}
Under the measure $\Prob^N$, by complementarity of events,
\begin{align*}
    \Prob^{N}\Big( \wasserstein_{\rho}(\Prob,\ProbHat) \leq \epsilon(\samplesVector) \Big) &= 1- \Prob^{N}\Big( \wasserstein_{\rho}(\Prob,\ProbHat) >  \epsilon(\samplesVector) \Big)
\end{align*}
Now, by Proposition \ref{prop:wasserstein-bound-no-confidence} and because $\epsilon(\samplesVector)$ only depends on the data through $\lowerb,\upperb$,
\begin{align*}
\Prob^{N}\Big( \wasserstein_{\rho}(\Prob,\ProbHat) >  \epsilon(\samplesVector) \Big) &\leq \Prob^{N}\Big( \exists i\in \{1,...,M\} \text{ s.t. }  \evlowerb{i} > \Prob(C_i) \vee \evupperb{i} < \Prob(C_i).  \Big)\\
& \quad \text{(By the Union bound)}\\
& \leq \sum_{i=1}^M \Big( \Prob^{N}(  \evlowerb{i} > \Prob(C_i)  ) + \Prob^{N}( \evupperb{i} < \Prob(C_i)) \Big).
\end{align*}
Now, to conclude it is enough to notice that each  $\Prob^{N}\Big( \evlowerb{i} > \Prob(C_i)  \Big)$ and $ \Prob^{N}\Big( \evupperb{i} < \Prob(C_i)  \Big)$ represent the tails of a binomial distribution, where $\evlowerb{i}$ and $\evupperb{i}$ are selected using samples $(x_1,...,x_N)\sim \mathbb{P}^{N}$ according to Eqns 1 and 2 in \citep{thulin2014cost}. Thus, from \eqref{eq:clopper-pearson-lower-bound} and \eqref{eq:clopper-pearson-upper-bound}, it holds that:
\begin{align*}
    \Prob^{N}\Big( \wasserstein_{\rho}(\Prob,\ProbHat) > \epsilon(\samplesVector) \Big) \leq \sum_{i=1}^M \Big( \frac{\beta}{2M} + \frac{\beta}{2M} \Big) = \beta
\end{align*}
which concludes the proof.
\end{proof}

\subsubsection{Proof of Proposition \ref{prop:milp-into-lp}}
\begin{proof}
    Consider the same setting as in Theorem \ref{thm:main-theorem}. For any $v \in \Delta^M$, by the triangle inequality, it holds that
    \begin{align*}
        \wasserstein_{\rho}(\Prob,\ProbHat) &\leq \wasserstein_\rho(\Prob, \sum_{i=1}^M \evv{i}\delta_{c_i}) + \wasserstein_\rho(\sum_{i=1}^M \evv{i}\delta_{c_i}, \ProbHat).
    \end{align*}
    In particular, consider $v = \arg\min_{z \in \text{Vert}\big( \constrainedSimplex{\lowerb}{\upperb}^M \big)} \norm{z - \pi}$. Because $v \in \constrainedSimplex{\lowerb}{\upperb}^M$, we denote $I = \{ i \in [M] : \evv{i}=\evlowerb{i} \}$ the set of indexes for which the coordinate $\evv{i}$ assumes the lower bound probability, $J = \{ j \in [M] : \evv{j}=\evupperb{j} \}$, and $f = [M] \setminus (I \cup J)$ as the free index which guarantees that $v$ is a valid probability vector. Then, following the same steps as in the proof of Theorem \ref{thm:main-theorem}, we can upper-bound the first term $\wasserstein_\rho(\Prob, \sum_{i=1}^M \evv{i}\delta_{c_i})=\sT_\rho(\Prob, \sum_{i=1}^M \evv{i}\delta_{c_i})^\frac{1}{\rho}$ by
    \begin{align}
        \sT_\rho(\Prob, \sum_{i=1}^M \evv{i}\delta_{c_i}) &\leq \sup_{\substack{
\omega \in \Delta^M, \\
\gamma \in \Delta^{M \times M}
}} \sum_{i=1}^M \sum_{j=1}^M \max_{x \in C_i} \norm{x-c_j}^{\rho} \evgamma{i,j} \nonumber \\
        &\qquad \text{s.t. }\;
\left\{
\begin{array}{l}
\gamma \bar\indicator = \omega \\
\gamma^\top \bar\indicator = v \\
\evgamma{i,i} \geq \min \bigl\{ \evomega{i}, \evv{i} \bigr\},
\quad \forall i \in [M] \\
\evomega{i} \in [\evlowerb{i}, \evupperb{i}] \quad \forall i \in [M],
\end{array}\label{eq:auxiliary-proof-min-constraint}
\right.
    \end{align}
    To conclude, we observe that because  $\evv{i}=\evlowerb{i}$ for $i \in I$, $\evv{j}=\evupperb{j}$ for $j \in J$, and $\evomega{i} \in [\evlowerb{i}, \evupperb{i}]$, we know in advance which variables activate the minimum in \eqref{eq:auxiliary-proof-min-constraint} for any $i\in I$ and $j\in J$. The rest of the proof follows as in the proof of Theorem \ref{thm:main-theorem}. 
\end{proof}

\subsubsection{Proof of Measurability}\label{proof:measurability}
\begin{proposition}
    The event $\big\{ \wasserstein_{\rho}(\Prob,\ProbHat) \leq \epsilon(\samplesVector) \big\}$ is $\borel(\sX^N)$-measurable.
\end{proposition}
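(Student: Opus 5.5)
The plan is to write the event as $\{ \Phi(\samplesVector) \leq 0\}$ with $\Phi(\samplesVector) = \wasserstein_{\rho}(\Prob,\ProbHat) - \epsilon(\samplesVector)$, and to show that both terms are Borel measurable functions of $(\vx_1,\dots,\vx_N)\in\sX^N$. The key observation is that both $\ProbHat$ and $\epsilon(\samplesVector)$ depend on the data only through the counts $k_i(\samplesVector)=\sum_{n=1}^N \indicator_{C_i}(\vx_n)$, $i\in[M]$, since the partition $\{C_i\}$ and the points $\{c_i\}$ are fixed (independent of $\samplesVector$). Each $C_i$ is Borel, so each $\indicator_{C_i}(\vx_n)$ is $\borel(\sX^N)$-measurable, and hence so is $k=(k_1,\dots,k_M)$. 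Moreover $k$ takes values in the finite set $K=\{k\in\{0,\dots,N\}^M:\sum_i k_i=N\}$.

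\textbf{Reduction.} There is a deterministic map $g:K\to\realNum$ with $\Phi(\samplesVector)=g(k(\samplesVector))$, and this is the whole argument. First, $\pi = k/N$, so $\ProbHat=\sum_i (k_i/N)\delta_{c_i}$ is determined by $k$. Since $\Prob$ is fixed, $\wasserstein_\rho(\Prob,\ProbHat)$ is therefore a function of $k$ alone. It is finite because $\sX$ is bounded. Second, $\evlowerb{i}$ and $\evupperb{i}$ are defined by \eqref{eq:clopper-pearson-lower-bound} and \eqref{eq:clopper-pearson-upper-bound} purely in terms of $k_i$, $N$, $M$ and $\beta$, with the boundary cases fixed explicitly. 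They are well defined because the binomial tail is strictly monotone and continuous in $p$, so each is a function of $k_i$. Third, the coefficients $\max_{x\in C_i}\norm{x-c_j}^\rho$ are deterministic constants; one may read the max as a sup, finite by boundedness of $\sX$. So $\epsilon(\samplesVector)$, the value of the optimization problem \eqref{eq:def-epsilon}, is a function of $(\lowerb,\upperb,\pi)$ and hence of $k$.

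\textbf{Conclusion and obstacle.} The event equals $k^{-1}(A)$ with $A=\{k\in K: g(k)\le 0\}\subseteq K$. Every subset of the finite set $K$ is measurable, and $k$ is measurable, so $k^{-1}(A)=\bigcup_{a\in A}\{k=a\}$ is a finite union of sets in $\borel(\sX^N)$. This sidesteps any need for continuity of the MILP value or of $\wasserstein_\rho$ in their arguments.

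The only delicate point is checking that $g$ is well defined, i.e.\ real-valued. The feasible set of \eqref{eq:def-epsilon} must be nonempty, so the sup is not $-\infty$. This can be settled by exhibiting a feasible point. The coupling from the footnote in the proof of Proposition \ref{prop:wasserstein-bound-no-confidence} with $\omega=\pi$ is feasible provided $\pi\in\constrainedSimplex{\lowerb}{\upperb}^M$, which holds since Clopper--Pearson intervals contain $k_i/N$. Alternatively, one can simply adopt the convention $\sup\emptyset=-\infty$; $g$ then takes values in the extended reals, and the argument is unchanged.
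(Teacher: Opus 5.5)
Your proof is correct and follows the paper's argument: both observe that $\wasserstein_\rho(\Prob,\ProbHat)$ and $\epsilon(\samplesVector)$ depend on the data only through the region counts. Both are therefore finitely-valued and measurable with respect to the Borel product cells $C_{i_1}\times\cdots\times C_{i_N}$ — the paper says "piecewise constant on these cells", you go through the measurable count map $k$ — and your added feasibility check, which the paper omits, is welcome, though with $\omega=\pi$ the footnote coupling degenerates ($a=0$), so simply take $\gamma=\mathrm{diag}(\pi)$.
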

\begin{proof}
Consider $z(\samplesVector)=\wasserstein_{\rho}(\Prob,\ProbHat)-\epsilon(\samplesVector)$, then we want to show that the event $\big\{z(\samplesVector) \leq 0 \big\}$ is $\borel(\sX^N)$-measurable. As the zero sublevel set of a measurable function is measurable, it is enough to show that $z:\sX^N\to \realNum$ is measurable. This is guaranteed if we show that both $\wasserstein_{\rho}(\Prob,\ProbHat)$, and $\epsilon(\samplesVector)$ are measurable. Note that both $\wasserstein_{\rho}(\Prob,\ProbHat)$, and $\epsilon(\samplesVector)$ depend on data $\samplesVector$ uniquely through the count of samples within each region $C_i$, i.e. $\sum_{n=1}^{N} \indicator_{C_i}(\vx_n)$ (the first via $\ProbHat=\sum_{i=1}^M \Big(\frac{1}{N}\sum_{n=1}^{N} \indicator_{C_i}(\vx_n) \Big)  \delta_{c_i}$, while the others via $\lowerb$ and $\upperb$, as in \eqref{eq:clopper-pearson-lower-bound} and \eqref{eq:clopper-pearson-upper-bound}). Consequently, when seen as a function of $\samplesVector$, it holds that  all two functions are constant for any set $C_{i_1}\times ... \times C_{i_M} \subseteq \mathcal{X}^N$, where each $i_k\in\{1,\dots,M\}$, that is, $\wasserstein_{\rho}(\Prob,\ProbHat):\mathcal{X}^N \to \mathbb{R}$, and $\epsilon:\mathcal{X}^N \to \mathbb{R}$ are piecewise constant with a finite number of pieces. Under the assumption that $\{ C_1,\dots,C_M \}$ is a partition of $\sX$ in $\borel(\sX)$-measurable sets, $C_{i_1}\times ... \times C_{i_M}$ is measurable and, therefore, $\wasserstein_{\rho}(\Prob,\ProbHat)$, and $\epsilon$ are measurable (e.g., from \citet{rudin1987real}, Definition 1.16).
\end{proof}

\subsubsection{Proof of Proposition \ref{prop:convergence-bound-n-m}}

First, we start by proving the following auxiliary lemma.
\begin{lemma}\label{lemma:bound-eps-by-eps1-eps2}
Let $\epsilon(\samplesVector)$ be as defined in Theorem \ref{thm:main-theorem} and $\xi(\samplesVector)$ as in Proposition \ref{prop:milp-into-lp}. Then, it holds that $\epsilon(\samplesVector)^\rho \leq \epsilon_1(\samplesVector) + \epsilon_2(\samplesVector)$ and $\xi(\samplesVector)^\rho \leq \epsilon_1(\samplesVector) + \epsilon_2(\samplesVector)$, where
\begin{align*}
    \epsilon_1(\samplesVector) &= \sum_{i=1}^M  \max_{x\in C_i}\norm{x-c_i}^{\rho}\evupperb{i}
\end{align*}
and
\begin{align*}
    \epsilon_2(\samplesVector) &= \norm{\sX}^\rho \sum_{i=1}^M (\evupperb{i} - \evlowerb{i})
\end{align*}
\end{lemma}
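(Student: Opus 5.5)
The plan is to take an arbitrary feasible pair $(\omega,\gamma)$ of the MILP in \eqref{eq:def-epsilon} and split its objective into the diagonal part and the off-diagonal part. The diagonal part gives $\epsilon_1$ and the off-diagonal part gives $\epsilon_2$. Since the bound will hold for every feasible point, it passes to the supremum. The same computation applied to \eqref{eq:def-xi} handles $\xi$.

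\emph{Diagonal part.} For each $i$, the marginal constraint $\gamma\bar\indicator=\omega$ together with $\gamma\ge 0$ gives $\evgamma{i,i}\le\evomega{i}\le\evupperb{i}$. Hence
\[
\sum_{i=1}^M \max_{x\in C_i}\norm{x-c_i}^\rho\evgamma{i,i}\le \sum_{i=1}^M \max_{x\in C_i}\norm{x-c_i}^\rho\evupperb{i}=\epsilon_1(\samplesVector).
\]

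\emph{Off-diagonal part.} Since $x\in C_i\subseteq\sX$ and $c_j\in C_j\subseteq\sX$, each cost satisfies $\max_{x\in C_i}\norm{x-c_j}^\rho\le\norm{\sX}^\rho$. The off-diagonal mass in row $i$ is $\sum_{j\ne i}\evgamma{i,j}=\evomega{i}-\evgamma{i,i}$. The diagonal constraint gives $\evgamma{i,i}\ge\min\{\evomega{i},\evpi{i}\}$, so
\[
\evomega{i}-\evgamma{i,i}\le(\evomega{i}-\evpi{i})_+ .
\]
The key point is that on the event where the Clopper--Pearson intervals are meaningful, $\evpi{i}\in[\evlowerb{i},\evupperb{i}]$. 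Together with $\evomega{i}\le\evupperb{i}$, this yields $(\evomega{i}-\evpi{i})_+\le\evupperb{i}-\evlowerb{i}$. Summing over $i$ gives the off-diagonal bound $\norm{\sX}^\rho\sum_i(\evupperb{i}-\evlowerb{i})=\epsilon_2(\samplesVector)$.

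\emph{The case of $\xi$.} Here the column marginal is $v$ instead of $\pi$. For $i\in I$ the constraint reads $\evgamma{i,i}\ge\evv{i}=\evlowerb{i}$, so the off-diagonal row mass is at most $\evomega{i}-\evlowerb{i}\le\evupperb{i}-\evlowerb{i}$. For $j\in J$ we have $\evgamma{j,j}\ge\evomega{j}$, so the off-diagonal row mass is zero. For the free index $f$, $\evv{f}\in[\evlowerb{f},\evupperb{f}]$, and the argument given for $\pi$ applies verbatim. The diagonal part is again bounded by $\evupperb{i}$.

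\emph{Main obstacle.} The delicate step is verifying that $\pi^{(i)}\in[\evlowerb{i},\evupperb{i}]$, i.e.\ that the Clopper--Pearson interval always contains the empirical frequency $k/N$. This follows from monotonicity of the binomial tail in $p$. At $p=k/N$, the tail $\sum_{j\ge k}$ is at least roughly $1/2>\beta/(2M)$, which forces $\evlowerb{i}\le k/N$; symmetrically, it forces $\evupperb{i}\ge k/N$. The degenerate conventions ($\evlowerb{i}=0$ when $k=0$, and the upper convention) must be checked to respect this. If that check is awkward, a fallback is to bound $(\evomega{i}-\evpi{i})_+\le|\evomega{i}-\evpi{i}|$ and control it via interval containment directly. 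Either way, the remaining steps reduce to the elementary inequalities above.
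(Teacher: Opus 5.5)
Your proof is correct and follows the paper's route. Both split the objective into diagonal and off-diagonal parts, bound the diagonal by $\sum_i \max_{x\in C_i}\|x-c_i\|^\rho p_u^{(i)}$, and bound each off-diagonal row mass $\omega^{(i)}-\gamma^{(i,i)}$ by $p_u^{(i)}-p_\ell^{(i)}$ with all costs capped by $\|\mathcal{X}\|^\rho$.

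Your version is more careful than the paper's in three places:
\begin{enumerate}
\item You bound the diagonal via $\gamma^{(i,i)}\le\omega^{(i)}\le p_u^{(i)}$, whereas the paper invokes the lower-bound constraint at that step.
\item You treat the constraints defining $\xi$, with $v$ in place of $\pi$, separately rather than by analogy.
\item You make explicit the containment $p_\ell^{(i)}\le\pi^{(i)}$, which the paper uses tacitly. This containment does hold: the median of $\mathrm{Bin}(N,k/N)$ is exactly $k$, so the upper tail at $p=k/N$ is at least $1/2>\beta/(2M)$ whenever $\beta<M$. You should state this exact median fact rather than ``roughly $1/2$''.
\end{enumerate}
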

\begin{proof}
    The objective functions in \eqref{eq:def-epsilon} and \eqref{eq:def-xi} can be rewritten as
    \begin{align*}
        \sum_{i=1}^M \sum_{j=1}^M \max_{x \in C_i} \norm{x-c_j}^{\rho} \evgamma{i,j} &= \sum_{i=1}^M \max_{x \in C_i} \norm{x-c_i}^{\rho} \evgamma{i,i} + \sum_{i=1}^M \sum_{j\neq i} \max_{x \in C_i} \norm{x-c_j}^{\rho} \evgamma{i,j}.
    \end{align*}
    Given the constraints in \eqref{eq:def-epsilon} and \eqref{eq:def-xi}, because $\evgamma{i,i} \geq \min\{ \evomega{i}, \evpi{i} \}$, and both $\evomega{i} \leq \evupperb{i}$ and $\evpi{i} \leq \evupperb{i}$, it holds that $\sum_{i=1}^M \max_{x \in C_i} \norm{x-c_i}^{\rho} \evgamma{i,i} \leq \epsilon_1(\samplesVector)$ for any feasible $\omega, \gamma$ in \eqref{eq:def-epsilon} and \eqref{eq:def-xi}. For the second term, we observe that for any feasible $\omega, \gamma$ in \eqref{eq:def-epsilon} and \eqref{eq:def-xi}
    \begin{align*}
        \sum_{i=1}^M \sum_{j\neq i} \max_{x \in C_i} \norm{x-c_j}^{\rho} \evgamma{i,j} &\leq \sum_{i=1}^M \max_{j \neq i}\max_{x \in C_i} \norm{x-c_j}^{\rho} \sum_{j\neq i} \evgamma{i,j} \\
        &\quad (\text{As } \sum_{j=1}^M \evgamma{i,j}=\evomega{i} \text{ and } \evgamma{i,i} \geq \min\{ \evomega{i}, \evpi{i} \}) \\
        &\leq \sum_{i=1}^M \max_{j \neq i}\max_{x \in C_i} \norm{x-c_j}^{\rho} (\evomega{i} - \min\{ \evomega{i}, \evpi{i} \}) \\
        &\leq \sum_{i=1}^M \max_{j \neq i}\max_{x \in C_i} \norm{x-c_j}^{\rho} (\evupperb{i} - \evlowerb{i}) \\
        &\leq \norm{\sX}^\rho \sum_{i=1}^M (\evupperb{i} - \evlowerb{i}),
    \end{align*}
    which concludes the proof.
\end{proof}
We are now ready to prove Proposition \ref{prop:convergence-bound-n-m}.
\begin{proof}
    From Lemma \ref{lemma:bound-eps-by-eps1-eps2}, we can show the convergence to zero of $\epsilon(\samplesVector)$ in Theorem \ref{thm:main-theorem} and $\xi(\samplesVector)$ in Proposition \ref{prop:milp-into-lp} by proving the convergence for both $\epsilon_1(\samplesVector)$ and $\epsilon_2(\samplesVector)$. Then, we also need to prove the convergence for the residual term $\wasserstein_\rho(\sum_{i=1}^M \evv{i}\delta_{c_i}, \ProbHat)$ in Proposition \ref{prop:milp-into-lp}. We start with $\epsilon_2(\samplesVector)$. From Lemma \ref{lemma:bound-eps-by-eps1-eps2},
    \begin{align}
    \epsilon_2(\samplesVector) &= \norm{\sX}^\rho \sum_{i=1}^M (\evupperb{i}-\evlowerb{i})
    \end{align}

    From Theorem 1 in \citet{thulin2014cost}, for region $C_i$ assuming $0<\evpi{i}=\frac{1}{N}\sum_{n=1}^{N} \indicator_{C_i}(\vx_i)<1$, for large $N$, one can approximate up to order $O(N^{-\frac{3}{2}})$: 
    \begin{align*}
    \evupperb{i}-\evlowerb{i} \approx \frac{2}{\sqrt{N}}\text{Probit}\Big( 1-\frac{\beta}{2M} \Big)\sqrt{\evpi{i}(1-\evpi{i})} + \frac{1}{N}.
    \end{align*}
    Instead, if $\evpi{i}=0$ or $\evpi{i}=1$, we simply have:
    \begin{align*}
    \evupperb{i}-\evlowerb{i} = 1- {\Big( \frac{\beta}{2M}\Big)}^{\frac{1}{N}}.
    \end{align*}
    In the following, we refer to $I_0$ as the indexes of regions with no samples, i.e. $\evpi{i}=0$ for $i\in I_0$, and $I_{+}$ the remaining regions (i.e., $0 < \evpi{i} < 1$ for $i\in I_{+}$). In this case, from above, up to order $O(N^{-\frac{3}{2}})$:
    \begin{align}
        \sum_{i=1}^M (\evupperb{i}-\evlowerb{i}) &= \sum_{i\in I_0} (\evupperb{i}-\evlowerb{i}) + \sum_{i\in I_{+}} (\evupperb{i}-\evlowerb{i}) \nonumber \\
        &\approx \sum_{i \in I_0} \Big( 1- \Big( \frac{\beta}{2M}\Big)^{\frac{1}{N}} \Big) + \sum_{i \in I_{+}} \Big( \frac{2}{\sqrt{N}}\text{Probit}\Big( 1-\frac{\beta}{2M} \Big)\sqrt{\evpi{i}(1-\evpi{i})} + \frac{1}{N} \Big) \label{eq:o-notation-main}
    \end{align}
    Let's examine the first term:
    \begin{align}
        \sum_{i \in I_0} \Big( 1- \Big( \frac{\beta}{2M}\Big)^{\frac{1}{N}} \Big) &= |I_0| \Big( 1- \Big( \frac{\beta}{2M}\Big)^{\frac{1}{N}} \Big) \leq M \Big( 1- \Big( \frac{\beta}{2M}\Big)^{\frac{1}{N}} \Big) = O\Big( \frac{M \log M}{N} \Big) \label{eq:o-notation-exp}
    \end{align}
    where the last equality can be obtained using Taylor expansion of the exponential, i.e. $1-\Big( \frac{\beta}{2M}\Big)^{\frac{1}{N}}=1-e^{-\frac{\log(2M / \beta)}{N}} = 1 - \Big( 1 - \frac{\log(2M / \beta)}{N} + O\Big( \frac{\log(2M / \beta)^2}{N^2} \Big) \Big)= O\Big( \frac{\log M}{N} \Big)$.
    
    Thus, for $\alpha<1$ and $M=O(N^\alpha)$, we have that, from \eqref{eq:o-notation-exp}:
    \begin{align}
        \sum_{i=1}^{M_0} \Big( 1- \Big( \frac{\beta}{2M}\Big)^{\frac{1}{N}} \Big) &= O\Big( \frac{N^\alpha \log N^\alpha}{N} \Big)=O\Big( \frac{\alpha \log N}{N^{1-\alpha}} \Big) \underset{N \to \infty}{\longrightarrow} 0
    \end{align}
    Under the same assumptions, we focus on the third term in \eqref{eq:o-notation-main}. We have:
    \begin{align}
        \sum_{i \in I_{+}} \frac{1}{N} = \frac{|I_{+}|}{N} \leq \frac{M}{N} = O \Big( \frac{M}{N} \Big)= O\Big( \frac{1}{N^{1-\alpha}} \Big)\underset{N \to \infty}{\longrightarrow} 0 
    \end{align}
    Finally, we examine the second term in \eqref{eq:o-notation-main}.
    \begin{align}
        \sum_{i \in I_{+}} \frac{2}{\sqrt{N}}\text{Probit}\Big( 1-\frac{\beta}{2M} \Big)\sqrt{\evpi{i}(1-\evpi{i})} &\leq \frac{2}{\sqrt{N}}\text{Probit}\Big( 1-\frac{\beta}{2M} \Big) \sum_{i=1}^{M} \sqrt{\evpi{i}(1-\evpi{i})} \nonumber \\
        &\leq \frac{2}{\sqrt{N}}\text{Probit}\Big( 1-\frac{\beta}{2M} \Big) \sum_{i=1}^{M} \sqrt{\frac{1}{M}\Big( 1-\frac{1}{M} \Big)} \label{eq:o-notation-concave} \\
        &= \frac{2}{\sqrt{N}}\text{Probit}\Big( 1-\frac{\beta}{2M} \Big) \frac{M \sqrt{M-1}}{M} \nonumber \\
        &= \frac{2}{\sqrt{N}}\text{Probit}\Big( 1-\frac{\beta}{2M} \Big) \sqrt{M-1} \nonumber
    \end{align}
    where the second inequality in \eqref{eq:o-notation-concave} uses the fact that the maximum of $\sum_{i=1}^{M} \sqrt{\evpi{i}(1-\evpi{i})}$ subject to $\pi \in \Delta^M$ is attained when $\evpi{i}=\frac{1}{M}$. Indeed, by the Cauchy-Schwarz inequality \citep{steele2004cauchy}, 
    $$\Big( \sum_{i=1}^{M} \sqrt{\evpi{i}(1-\evpi{i})} \Big)^2 \leq \Big( \sum_{i=1}^{M} \evpi{i} \Big) \Big( \sum_{i=1}^{M} (1-\evpi{i}) \Big) = M-1$$
    as $\sum_{i=1}^M \evpi{i}=1$. Thus, $\sum_{i=1}^{M} \sqrt{\evpi{i}(1-\evpi{i})} \leq \sqrt{M-1}$. To conclude, we observe that this upper-bound is attained for $\evpi{i}=\frac{1}{M}$.
    Further, we note that $\text{Probit}(1-\frac{\beta}{2M})=O \Big( \sqrt{\log M} \Big)$ (Eqn (2) in \citet{blair1976rational}). In this case,
    \begin{align}
        \sum_{i \in I_{+}} \frac{2}{\sqrt{N}}\text{Probit}\Big( 1-\frac{\beta}{2M} \Big)\sqrt{\evpi{i}(1-\evpi{i})} &= O\Big( \sqrt{\frac{M \log M}{N}} \Big)=O\Big( \sqrt{\frac{ \alpha \log N}{N^{1-\alpha}}} \Big)\underset{N \to \infty}{\longrightarrow} 0 
    \end{align}
    To conclude, we remark that the remainder of the approximation in \eqref{eq:o-notation-main} also converges to zero, i.e. $\sum_{i=1}^M O(N^{-\frac{3}{2}})=O(M N^{-\frac{3}{2}})=O(N^{-\frac{3}{2}+\alpha})\underset{N \to \infty}{\longrightarrow} 0$. Now, we move to $\epsilon_1(\samplesVector)$. Again, from \eqref{lemma:bound-eps-by-eps1-eps2},
    \begin{align*}
    \epsilon_1(\samplesVector)&=\sum_{i=1}^M  \max_{x\in C_i}\norm{x-c_i}^{\rho}\evupperb{i} \\
        &= \sum_{i=1}^M \max_{x\in C_i}\norm{x-c_i}^{\rho} \evlowerb{i}+ \sum_{i=1}^M \max_{x\in C_i}\norm{x-c_i}^{\rho} (\evupperb{i}-\evlowerb{i}) \\
        &\leq \sum_{i=1}^M \max_{x\in C_i}\norm{x-c_i}^{\rho}\evpi{i}+ \norm{\sX}^{\rho}\sum_{i=1}^M (\evupperb{i}-\evlowerb{i})
    \end{align*}
    The convergence of the second term $\norm{\sX}^{\rho}\sum_{i=1}^M (\evupperb{i}-\evlowerb{i})$ was studied for $\epsilon_2(\samplesVector)$. Therefore, to conclude, it is enough to prove the convergence of $\sum_{i=1}^M \max_{x\in C_i}\norm{x-c_i}^{\rho}\evpi{i}$. From the assumptions on the partition,
    \begin{align*}
    \sum_{i=1}^M \max_{x\in C_i}\norm{x-c_i}^{\rho} \evpi{i} &\leq
        \sum_{i=1}^M \norm{C_i}^{\rho}\evpi{i} \\
        &= \sum_{i=1}^{M-1} \norm{C_i}^{\rho}\evpi{i} + \norm{C_M}^{\rho}\evpi{M} \\
        &\leq \frac{L}{M}\sum_{i=1}^{M-1}\evpi{i} + \norm{C_M}^{\rho} \frac{\delta^\rho}{2\norm{\sX}^\rho} \\
        &\leq \frac{L}{M} + \frac{\delta^\rho}{2} \leq \frac{L}{N^\alpha} +  \frac{\delta^\rho}{2} \underset{N \to \infty}{\longrightarrow} \frac{\delta^\rho}{2}
    \end{align*}
    Because of the convergence to zero of all terms above, there exists a $N^* \in \natNum$ such that for any $N \geq N^*$, their collection is upper-bounded by $\frac{\delta^\rho}{4}$. Thus, for $N\geq N^*$, $\epsilon(\sD_N) \leq \left( \frac{3}{4} \right)^\frac{1}{\rho}\delta \leq \delta$ and, analogously, $\xi(\sD_N) \leq \left( \frac{3}{4} \right)^\frac{1}{\rho}\delta$. To conclude the proof, the only missing step is to prove the convergence of the term $\wasserstein_\rho(\sum_{i=1}^M \evv{i}\delta_{c_i}, \ProbHat)$ to zero (as this is equivalent to the existence of a $N^*$ for which $N\geq N^*$ implies $\wasserstein_\rho(\sum_{i=1}^M \evv{i}\delta_{c_i}, \ProbHat) \leq \left( 1 - \left( \frac{3}{4} \right)^\frac{1}{\rho} \right) \delta$). This follows from observing that $\wasserstein_\rho(\sum_{i=1}^M \evv{i}\delta_{c_i}, \ProbHat)=\sT(\sum_{i=1}^M \evv{i}\delta_{c_i}, \ProbHat)^\frac{1}{\rho}$ and
    \begin{align}\label{eq:proof-extra-term-convergence}
       \sT_\rho(\sum_{i=1}^M \evv{i}\delta_{c_i}, \ProbHat) &\leq \sum_{i=1}^M \norm{c_i-c_i}^\rho \evgamma{i,i} + \sum_{i=1}^M \sum_{i\neq j} \norm{c_i-c_j}^\rho \evgamma{i,j}
    \end{align}
    for the transport plan $\gamma \in \Gamma(v, \pi)$ such that $\evgamma{i,i}\geq \min\{ \evv{i}, \evpi{i} \}$. Following the same reasoning from the proof of Lemma \ref{lemma:bound-eps-by-eps1-eps2}, we can upper-bound \eqref{eq:proof-extra-term-convergence} by
    \begin{align*}
        \eqref{eq:proof-extra-term-convergence} &\leq \norm{\sX}^\rho \sum_{i=1}^M (\evupperb{i} - \evlowerb{i})
    \end{align*}
    for which the convergence was already previously studied.
\end{proof}


\end{document}